\documentclass[letterpaper]{article} 
\usepackage{aaai2026}  
\usepackage{times}  
\usepackage{helvet}  
\usepackage{courier}  
\usepackage[hyphens]{url}  
\usepackage{graphicx} 
\usepackage{natbib}  
\usepackage{caption} 
\usepackage[ruled,vlined,linesnumbered]{algorithm2e}
\usepackage{comment}

\usepackage{newfloat}
\usepackage{listings}
\DeclareCaptionStyle{ruled}{labelfont=normalfont,labelsep=colon,strut=off} 
\usepackage{amsmath,amsthm,amssymb}
\usepackage{booktabs,multirow,array}
\usepackage{pgfplots}
\pgfplotsset{compat=1.17}
\usepackage{xspace,accents}
\usepackage{enumitem}
\usepackage{subcaption}
\usepackage{caption}
\newtheorem{theorem}{Theorem}
\newtheorem{lemma}{Lemma}
\newtheorem{definition}{Definition}
\newtheorem{corollary}{Corollary}

\usepackage{listings}
\newcommand{\openf}{\textsc{Open\textsubscript{F}}\xspace}      
\newcommand{\openb}{\textsc{Open\textsubscript{B}}\xspace}      

\newcommand{\gminf}{\mbox{$gMin_F$}\xspace}     
\newcommand{\gminb}{\mbox{$gMin_B$}\xspace}   

\newcommand{\gmaxf}
{\mbox{$gMax_F$}\xspace} 
\newcommand{\gmaxb}{\mbox{$gMax_B$}\xspace} 

\newcommand{\mmm}
{\mbox{XMM}\xspace} 

\newcommand{\start}{\ensuremath{start}\xspace}
\newcommand{\goal}{\ensuremath{goal}\xspace}

\newcommand{\BiHS}{\mbox{BiHS}\xspace} 
\newcommand{\bihs}{\mbox{BiHS}\xspace} 
 
\newcommand{\gD}{\mbox{$g_D$}}
\newcommand{\gF}{\mbox{$g_F$}}
\newcommand{\gB}{\mbox{$g_B$}}

\newcommand{\hD}{\mbox{$h_D$}}
\newcommand{\hF}{\mbox{$h_F$}}
\newcommand{\hB}{\mbox{$h_B$}}

\newcommand{\fD}{\mbox{$f_D$}}

\newcommand{\Cstar}{C\ensuremath{^*}}

\definecolor{darkgreen}{rgb}{0.0, 0.5, 0.0}
\definecolor{forestgreen}{HTML}{1A5235}

\iftrue
    \usepackage{xcolor}

    \newcommand{\af}[1]{\textcolor{blue}{[Ariel: #1]}}
    
    \newcommand{\CRC}[2]{#2}
    
\else
    \newcommand{\roni}[1]{}
    \newcommand{\jonathan}[1]{}
    \newcommand{\af}[1]{}
    \newcommand{\noy}[1]{}
\fi

\title{Bidirectional Search for Longest Paths: Case for Front-to-Front Heuristics}
\begin{document}

\author{Tzur Shubi,
Ariel Felner, Solomon Eyal Shimony, Shahaf S. Shperberg}

\affiliations{Stein Faculty of Computer and Information Science, Ben-Gurion University of the Negev, Israel}

\maketitle

\begin{abstract}

Bidirectional heuristic search can potentially reduce search effort for problems amenable to backward search. Therein, it is well-known that front-to-front heuristics can reduce the number of node expansions, but their overhead is so high that overall runtime almost always increases. 
\CRC{R4C1}{We propose BiXDFBnB, a bidirectional depth-first branch-and-bound algorithm that adapts the Single-Frontier Bidirectional Search (SFBDS) framework---originally developed for shortest-path (MIN) problems---to the Generalized Longest Simple Path (GLSP) setting. Because SFBDS inherently operates on paired states, front-to-front (F2F) heuristic evaluation arises naturally and avoids the overhead typically associated with bidirectional frontier management. We show that this adaptation can be successfully applied to maximization (MAX) problems while efficiently handling overlapping constraints. BiXDFBnB is applied to several types of longest-path problems: Longest Simple Path (LSP), Snakes, and Coil-in-the-Box (CIB). Empirical evaluation shows that the new algorithm frequently reduces the number of node expansions and, in some cases, also improves overall runtime. }


  \end{abstract}

\section{Introduction}

Bidirectional heuristic search can potentially reduce search effort for combinatorial search problems amenable to backward search. Therein, it is well-known that front-to-front heuristics can reduce the number of expansions in search, but their overhead is so high that overall runtime almost always increases. This paper examines whether front-to-front heuristics can be beneficially applied in longest-path search.

The aim in the {\em shortest path problem} is to minimize a path cost between two given vertices. Such problems are called {\em minimization} (MIN) problems. A prominent problem in graph theory is to find the {\em longest simple path} (LSP) between two given vertices in an undirected graph. A {\em simple path} is one where no vertex is visited more than once. In LSP, the aim is to find a maximum-cost path. Such problems are called {\em maximization} (MAX) problems.
LSP is known to be NP-hard, and even hard to approximate within a constant factor~\citep{karger1997approximating}. The motivation for solving LSP comes from a variety of domains such as information retrieval on peer-to-peer networks~\citep{Wong:2005:IRP:1062745.1062799}, estimating the worst packet delay of Switched Ethernet network~\citep{DBLP:conf/sies/SchmidtS10}, multi-robot patrolling~\citep{Portugal:2010:MAM:1774088.1774360}, and VLSI design where the longest path should be found between two components on a printed circuit board \citep{chen2016vlsi}. Another important variant of LSP is {\em Snake-in-the-box} (SIB)~\citep{Kautz:1958:UDE}. In a {\em snake}, a path may not use neighbors of vertices that are already in the path. A {\em box} is an $n$-dimensional cube. SIB can help identify efficient error-correcting codes.

Several prior works treated LSP as a heuristic search problem:~\citet{DBLP:conf/socs/SternKPFR14} modified heuristic search algorithms designed for MIN problems to solve MAX problems. Follow-up work~\cite{DBLP:conf/aips/CohenSF20} proposed refinements, including
grid-based heuristics for LSP, and introduced several admissible heuristics for the SIB problem \citep{DBLP:conf/socs/PalomboSPFKR15other}.

LSP and SIB were recast within a {\em generalized longest simple path} (GLSP) framework that includes these problems as special cases \citep{ShimonyEtAl2022SOCS}. Most prior work on GLSP consists of unidirectional heuristic search algorithms, such as A* and branch-and-bound, that were modified to work with MAX problems. \CRC{R4C1}{Recent work \cite{ShubiSFS25} proposed XMM, a bidirectional search algorithm for GLSP that uses front-to-end heuristics in an attempt to approximate a meet-in-the-middle approach. Unlike in MIN problems, where lower-bound heuristics enable algorithms to prevent the expansion of nodes beyond the optimal solution’s midpoint (\textit{restrained search}), bidirectional search for GLSP (MAX) relies on overestimating upper bounds. This makes it hard to identify exactly when the search frontiers have crossed the midpoint.}

In bidirectional GLSP search, when forward and backward frontiers meet, one must verify that the concatenated paths are valid (exclusion constraints like vertex overlaps not violated). Maintaining separate frontiers and validating all crossing path pairs incurs large overhead. To \CRC{}{address} this, we introduce BiXDFBnB, a depth-first bidirectional branch-and-bound algorithm that restructures the Single-Frontier Bidirectional Search (SFBDS) framework~\cite{SFBDS}---originally designed for MIN problems---to conquer the challenges of the GLSP (MAX) setting. 

\CRC{R4C1}{Instead of growing two independent frontiers, BiXDFBnB maintains just one pair of (forward, backward) states  per search node. Given this synchronized paired-state structure, utilizing a pairwise front-to-front (F2F) heuristic is the natural design choice, entirely eliminating the frontier-matching bottleneck. However, porting this architecture from MIN to MAX is non-trivial due to the reliance on overestimating upper bounds and the inability to prune beyond a guaranteed midpoint. A core algorithmic novelty of this paper is in  effectively utilizing F2F bounds to prune the MAX search space. In addition, we integrate simultaneous Cartesian node expansion to guarantee valid path-meetings without parity issues, We then provide theoretical guarantees on the structural properties of the resulting algorithm followed by empirical evaluation validating its advantages.}



An additional contribution is that for unit edge costs, BiXDFBnB "meets in the middle", a property not achievable in \mmm.
An empirical evaluation strongly supports the case for F2F heuristics in BiXDFBnB. 
In many cases, BiXDFBnB also outperforms XMM and unidirectional search.

\section{Background}


\subsection{Generalized Longest Simple Path Problems}

The Generalized Longest Simple Path (GLSP) framework \citep{ShimonyEtAl2022SOCS}
includes both LSP and Snakes. 
Let $G=(V,E,w)$ be a connected undirected weighted graph.
A path from $v_0$ to $v_m$ is an alternating sequence $P=(v_0,(v_0,v_1),v_1,(v_1,v_2),\ldots,v_m)$ 
of vertices and edges
in $G$ such that consecutive elements in the sequence are
adjacent, i.e., each edge is incident on the 
preceding and following vertices in $P$.

A (global) {\em exclusion constraint} $L$ is a function mapping each
$x\in (V \cup E)$
to a subset of $V \cup E$.
The constraint means that {\bf after}  exiting $x$, a path may not visit any member of $L(x)$.
For example, the constraint: $\forall x\in V,~ L(x)=\{x\}$. i.e. ``no vertex may be visited more than once" defines the longest simple path (LSP) problem (Elements $x$ for which $L(x)$ is not specified - edges here - are assumed to have no constraint).
To define {\em snake problems} (not necessarily "in a box"), we have: 
$\forall x\in  V,~ L(x)=\{x\}
\cup N(x)$,
where $N(x)$ are the immediate neighbors of $x$.

\begin{definition}[Generalized LSP Problem] Given a graph
$G=(V,E,w)$, and a constraint $L$, find a path of maximum weight $w$ in $G$ (optionally starting at start vertex $s$, optionally ending at target vertex $t$) that does not violate $L$.
\end{definition}

In this paper, we only
examine the case where $G$ is unweighted (\( w(e) = 1 \;\; \forall e \in E \)), 
and where the path must start at $s$ and end at $t$.

\subsection{Best-First Exhaustive
Search in GLSP}

A {\em state} in the state-space is a path $\pi$ beginning at start vertex $s$. We assume, unless stated otherwise, that forward search begins with start state $\pi=(s)$,  and progresses
by adding one edge and vertex at a time to a path $\pi$ that has $s'$ as a last vertex.  A {\em goal state} is a path ending at target vertex $t$.
Following \citet{DBLP:conf/socs/SternKPFR14}, for a search node $N$, we denote its state, i.e., the (partial) path it represents, by
$N.\pi$, its most recently added vertex $s'$ by $N.head$, and the rest of the vertices in $N.\pi$ by $N.tail$. 
For standard unidirectional search, $g(N)$ is naturally the length of $N.\pi$.
The successors of $N.\pi$ are denoted by $\Gamma (N.\pi)$.
The {\em remaining graph}
$G_r(G,N.\pi)$ is defined to be $G$ after
removing (1) all $N.\pi$ vertices other than $N.head$ from $G$, and (2) all other elements no longer
allowed under constraint $L$.
Clearly, a node $N$ is not a dead end only if $N.head$ and $t$ are in the same connected component of $G_r(G,N.\pi)$.

In shortest-path problems (and in MIN problems in general), search algorithms such as A*
prioritize nodes with lower $f(N)=g(N)+h(N)$ values, where $g$ is the path cost from the start state to $N$, and $h$ is a cost estimation of the {\bf shortest} path from $N$ to the goal.
\citet{DBLP:conf/socs/SternKPFR14} present modifications that need to be done in order to modify A* (commonly used for MIN problems) to MAX problems. It is common practice to use all these modifications when implementing A* for MAX problems. A prominent modification for MAX is that A* now prioritizes nodes with the largest $f(N)$ values, as done specifically for LSP, where $h$ in GLSP estimates the length of the {\bf longest} path to the goal.

In GLSP, duplicate states do not occur because the state is a path, rather than just a head vertex.
Still, one can remove symmetric states: in LSP states with the same head vertex, and  the same set of visited vertices (but in a different order)~\cite{DBLP:conf/aips/CohenSF20}.

A function $h$ is said to be {\em admissible} for GLSP (i.e., MAX problems) iff for every node $N$ in the search space, $h(N)$ is {\em greater than or equal to} the  weight of the longest constrained
path (longest simple path in LSP)
from $N.head$ to $t$ in $G_r(G,N.\pi)$.
GLSP heuristics 
use graph connectivity \citep{ShimonyEtAl2022SOCS,DBLP:conf/socs/PalomboSPFKR15other}.
Trivially, an admissible heuristic (upper bound) is the entire remaining graph $G_r$ size. Obviously, it is also admissible to delete vertices no longer reachable
in $G_r(G,N.\pi)$ from the path head $N.head$ or from the target $t$. 
Finally, one can delete vertices $v \in G_r(G,N.\pi)$ for which there is no simple path from $N.head$ to $t$ containing $v$. 
\CRC{R1C3, R4C3}{This heuristic is denoted by $h_{BCC}$ \citep{ShimonyEtAl2022SOCS} since it can be efficiently computed using biconnected components as follows:
Let $G^+_r(G,N.\pi)= G_r(G,N.\pi)\cup (N.head,t)$, and let $B$ be the biconnected block of $G^+_r(G,N.\pi)$ containing $N.head$ and $t$. Then $v$ is a vertex
on some simple path  from $N.head$ to $t$ in $G_r(G,N.\pi)$ iff $v$ is in  $B$.
The number of edges in a simple path from $N.head$ to $t$ is thus at most $h_{BCC}=|V(B)|-1$,
where $V(B)$ are the vertices of $B$. A stronger heuristic  ($h_{MIS}$) based on maximum independent sets and triconnectivity exists 
\citep{ShimonyEtAl2022SOCS,DBLP:conf/socs/DahanTSD24}. As $h_{MIS}$ was ineffective for highly connected graphs, as used in our experiments, this heuristic is not further discussed here.
}

Snake problems have stronger heuristics~\citep{DBLP:conf/socs/PalomboSPFKR15other}.
For example,
the {\em snake head} heuristic ($h_{SH}$) 
recognizes that only one neighbor of the head can be used in a snake path. Another heuristic is the {\em star pattern} heuristic \citep{DBLP:conf/socs/PalomboSPFKR15other}, which subtracts 1 from the bound for each vertex and all its $(\geq 3$) neighbors not yet visited. This heuristic was improved by \citet{ShimonyEtAl2022SOCS} to
a vertex and only 3 of its neighbors, called the "{\em Y} heuristic ($h_Y$). We use the best-performing heuristic in our evaluation.

\subsubsection{Unidirectional DFBnB for GLSP}

Unidirectional Depth-First Branch and Bound (DFBnB) for longest path (denoted here by XDFBnB) \cite{DBLP:conf/socs/SternKPFR14} iteratively extends a single path starting from the initial vertex $s$. It maintains an incumbent longest $s$-$t$ path, $S$, found so far. At each step, an admissible heuristic upper bound on the remaining path length is added to the current path's length. If this sum is not strictly greater than $|S|$, the current branch cannot improve the incumbent and is safely pruned. Otherwise, the valid successors of the current path's head are generated, sorted in descending order of their path cost estimates $f=g+h$, and recursively expanded. This node ordering encourages finding long paths early, thus rapidly tightening the lower bound $|S|$ and pruning remaining branches more effectively.

\subsection{Bidirectional Search}



In \bihs for MIN problems, the objective is to find a least-cost path between \start\ and \goal\ in a given graph $G$. 
The shortest distance between nodes $x$ and $y$ is denoted by $\mathit{c}(x,y)$, where $\mathit{c}(\start, \goal) = \Cstar$. 
\bihs\ performs a forward search ($F$) from \start\ and a backward search ($B$) from \goal, continuing until the two search frontiers meet. \bihs algorithms typically maintain two open lists, \openf\ and \openb, for the forward and backward searches, respectively. Each node is associated with $g$-, $h$-, and $f$-values. For a given direction $D \in \{F,B\}$, these values are denoted as $\gD$, $\hD$, and $\fD$ (i.e., \(g_{F}, h_{F}, f_{F}\) for forward search and \(g_{B}, h_{B}, f_{B}\) for backward search). 

Most \bihs\ algorithms utilize two \emph{front-to-end} (F2E) heuristic functions~\citep{Kaindl1997}, $\hF(s)$ and $\hB(s)$, which estimate $\mathit{c}(s,\goal)$ and $\mathit{c}(\start,s)$ for all $s \in G$, respectively. A forward heuristic $\hF$ is said to be \emph{admissible} if $\hF(s) \leq \mathit{c}(s, \goal)$ for all $s$ in $G$, and \emph{consistent} if $\hF(s) \leq \mathit{c}(s, s') + \hF(s')$ for all $s$ and $s'$ in $G$. Analogous definitions apply for backward admissibility and consistency.
More informative are {\em front-to-front} (F2F) heuristics, which instead estimate the distance $c(s,s')$ between a given node $s$ on the forward frontier to each node $s'$ on the backward frontier by $h(s,s')$.
A typical priority function here is then
given by:
\[
f_F(s)=g_F(s)+\min_{s'\in O_B} (h(s,s')+g_B(s'))
\]


Since the heuristic estimations in this
$f_F(s)$ is for smaller parts (between two frontier nodes) it tends to be more accurate than front-to-end heuristics (which estimate all the way towards the goal).
Unfortunately, despite drastically decreasing the number of expansions,
the overhead due to having to compute the minimum over
all the nodes in the opposite frontier typically increases the overall runtime. Thus, front-to-front heuristics are rarely used in practice.

{\em Single frontier bidirectional search} (SFBDS)~\cite{SFBDS} is a general 
front-to-front \BiHS\ framework which only uses a single frontier. SFBDS basically spans a search tree that can then be searched with any known search algorithm. A node in a search
tree spanned by SFBDS consists of a pair of states $(x,y)$: a forward state $x$ and a backward state $y$. Such a node corresponds to
the task of finding a path between $x$ and $y$. This task is recursively
decomposed by expanding either $x$ or $y$ and generating new tasks between
either \CRC{R1MC2}{(1) $\Gamma(x)$ and $y$, or (2) $\Gamma(y)$ and $x$.}
At every node, \CRC{AllReviewers}{an {\em expansion policy} (known as {\em jumping policy})} decides which of the two states to expand
next, i.e., the search can proceed forward or backward.
Given a fixed expansion policy, a tree is induced where the cost of a node in that tree is $f(x,y)=\gF(x)+\gB(y) + h(x,y)$, where $h(x,y)$ is a F2F heuristic between $x$ and $y$. That tree 
can be searched using any admissible search algorithm, such as A* or IDA*.  


\subsection{Bidirectional Search in GLSP}

\bihs requires the ability to perform backward search from the goal state(s), as well as forward search from the initial state. However, in GLSP, a state consists of an entire path. 
To achieve backward search for GLSP, our backward search mirrors the forward search by growing an initially empty path starting from the target {\em vertex} $t$ rather than from a goal {\em state}.
Such backward search creates difficulties when aiming at bidirectional search - challenges that have been addressed by \citet{ShubiSFS25}, as follows.

\paragraph{Solution Detection.} In bidirectional search for GLSP, algorithms must cross-reference frontiers to find path intersections (e.g., by indexing nodes by their $head$ vertex). Once a match is found, the algorithm must perform \textit{path verification} to ensure the concatenated paths do not overlap or violate any GLSP constraints before a valid solution is declared.

\paragraph{Search Bounds.} When to halt the search is another non-trivial issue. In bidirectional best-first algorithms for MAX problems (like XMM), the search maintains a global upper bound on the longest path based on the heuristic evaluations of the forward and backward frontiers. The search proves optimality and halts when it finds a path whose cost equals this upper bound. In contrast, depth-first branch-and-bound approaches do not maintain expansive frontiers; instead, they maintain a global incumbent solution and aggressively prune any individual branch whose heuristic upper bound cannot strictly improve upon the incumbent.

\subsection{XMM and Bidirectional Baselines}

A prominent strategy in standard \bihs is attempting to "meet in the middle" by delaying the expansion of nodes that have progressed past the solution midpoint. For MAX problems like GLSP, the state-of-the-art baseline is {\em MM for Maximum}, \textbf{XMM}~\cite{ShubiSFS25}, which achieves this by ordering its open lists using the following priority function:
\begin{equation}\label{eq:MNM} 
pr_D(n) \triangleq \min(2h_D(n), g_D(n)+h_D(n))
\end{equation} 
This function behaves like standard $f=g+h$ initially, but safely delays the expansion of nodes where $g_D(n) > h_D(n)$ (nodes estimated to be past the middle). While the overestimating nature of MAX heuristics prevents XMM from strictly guaranteeing that no nodes beyond the midpoint are expanded, it remains the primary best-first bidirectional baseline for our empirical evaluation.

\section{Bidirectional DFBnB for GLSP}
\begin{algorithm}[h]
\caption{BiXDFBnB: Bidirectional DFBnB for Longest Simple Path}
\label{alg:BiXDFBnB}
\DontPrintSemicolon
\SetKwProg{Fn}{Function}{:}{}
\SetKwInOut{Input}{Input}
\SetKwInOut{Output}{Output}

\Input{Undirected graph $G=(V,E)$, start $s \in V$, target $t \in V$, admissible front-to-front heuristic $h_{F2F}$}
\Output{A longest simple $s$-$t$ path $S$}
\BlankLine

\textbf{global} $S \leftarrow \emptyset$\\
$\mathit{BiXDFBnB}(\mathit{MkNd}(s), \mathit{MkNd}(t))$\\
\Return $S$\\
\BlankLine

\Fn{$\mathit{BiXDFBnB}(s_F, s_B)$}{
    \If{$|s_F.\pi| + h_{F2F}(s_F, s_B) + |s_B.\pi| \le |S|$}{
        \Return \tcp*{Pruning}
    }
    
    \uIf{$s_F.\mathit{head} = s_B.\mathit{head}$}{
        $\pi \leftarrow s_F.\pi \cdot \overline{s_B.\pi}$ \tcp*{Exact Meet}
        \If{$|\pi| > |S|$}{
            $S \leftarrow \pi$\\
        }
        \Return\\
    }
    \uElseIf{$s_F.\mathit{head} \in s_B.\mathit{tail}$ \textbf{or} $s_B.\mathit{head} \in s_F.\mathit{tail}$}{
        \Return \tcp*{Overlap Rejection}
    }
    \ElseIf{$(s_F.\mathit{head}, s_B.\mathit{head}) \in E$}{
        $\pi \leftarrow s_F.\pi \cdot \overline{s_B.\pi}$ \tcp*{Adjacent Meet}
        \If{$|\pi| > |S|$}{
            $S \leftarrow \pi$\\
        }
    }
    \BlankLine
    
    \tcp*{Simultaneous Expansion}
    \ForEach{$(succ_F, succ_B) \in \mathit{Sorted}(\Gamma(s_F) \times \Gamma(s_B))$}{
        $\mathit{BiXDFBnB}(succ_F, succ_B)$ \tcp*{Empty product prunes dead-ends}
    }
}
\end{algorithm}


\CRC{R4C1}{To overcome the limitations of standard bidirectional architectures in maximization settings, we introduce BiXDFBnB. This algorithm represents a non-trivial realization of the Single-Frontier Bidirectional Search (SFBDS) framework~\cite{SFBDS} engineered specifically for the complex constraints of GLSP. While SFBDS was originally conceived to minimize frontier matching overhead in shortest-path (MIN) problems, transitioning it to a maximization (MAX) depth-first branch-and-bound framework requires a fundamental rethink of node representation, expansion coordination, and pruning mechanics. Rather than simply tracking independent frontiers, BiXDFBnB structuralizes the search space into tightly synchronized paired states, turning what is traditionally a costly frontier-coordination liability into an algorithmic advantage for tracking path constraints.}


\subsection{The BiXDFBnB Algorithm}

Algorithm \ref{alg:BiXDFBnB} outlines the pseudocode for BiXDFBnB. The algorithm maintains the incumbent longest path found so far, denoted by $S$, initialized to an empty path (line 1). Instead of maintaining expanding open lists, at each recursive step, the search evaluates a single pair of states: a forward state $s_F$ representing a path from $s$, and a backward state $s_B$ representing a path from $t$.
The recursive call has three primary phases: pruning, path overlap rejection, and expansion.

\subsubsection{Bound Pruning}

(Lines 5-6): The algorithm relies on an admissible front-to-front heuristic $h_{F2F}(s_F, s_B)$, which provides an upper bound on the length of the longest simple path connecting $s_F.head$ to $s_B.head$ using only remaining valid vertices. 
In Line 5, the algorithm evaluates the pruning condition. If the combined length of the forward path, the backward path, and the heuristic estimate is less than or equal to the length of the incumbent solution $|S|$, the current branch cannot possibly yield a strictly longer path. The algorithm thus safely prunes the node and backtracks.

\subsubsection{Pruning overlapping paths}

(Lines 12-13): To guarantee path compliance with the global exclusion constraint $L$, the algorithm verifies that the head of one state has not collided with the previously visited body ($tail$) or other constrained vertices of the opposing state. If such an intersection occurs, the paths cross illegally, and the branch is pruned.

\subsubsection{Node Expansion Strategy}

(Lines 18-19): The canonical dynamic of SFBDS alternates between expanding in one direction by a single vertex while freezing the other. However, BiXDFBnB opts for a \textit{Simultaneous Expansion} paradigm. Rather than choosing one side to advance, it expands both paths simultaneously by generating the Cartesian product of the valid successor sets: $\Gamma(s_F) \times \Gamma(s_B)$. For each pair $(succ_F, succ_B)$ generated by this Cartesian product, the algorithm computes the front-to-front heuristic $h_{F2F}(succ_F, succ_B)$. The pairs are sorted in descending order according to these heuristic estimates and explored via depth-first recursive calls. 

Due to this expansion scheme, there are two possible path meeting scenarios, as we next explain.

\subsubsection{Exact Meet}

(Lines 7-11): If both paths terminate on the exact same vertex, a valid $s$-$t$ path is formed by concatenating $s_F.\pi$ with the reversed $s_B.\pi$. If this new path improves upon $S$, the incumbent is updated. Because the paths have collided, no further simple extensions are possible, and the algorithm backtracks.

\subsubsection{Adjacent Meet}

(Lines 14-17): Advancing both paths simultaneously induces the \emph{Adjacent Meet} condition: the forward and backward path heads can become adjacent in $G$ before landing on the exact same vertex. 
In this scenario, a valid $s$-$t$ path is successfully bridged via the connecting edge, and the incumbent $S$ is updated if this bridged path is longer. Simultaneous expansion effectively collapses two independent alternating steps into one, tightly coupling the front-to-front heuristic evaluation of the joint moves while naturally catching valid paths before the search frontiers illegally cross.
After an adjacent meet the BiXDFBnB does not backtrack. Indeed, it prevents both searches from continuing via the crossing edge. However, because new longer paths can still be formed, the search continues via other edges.

\subsection{Generalizing to GLSP}
While Algorithm \ref{alg:BiXDFBnB} focuses on the Longest Simple Path (LSP) problem, BiXDFBnB naturally generalizes to all Generalized Longest Simple Path (GLSP) framework problems. Simply replacing  the overlap rejection conditions  with the appropriate GLSP constraint check suffices.
However, some optimizations become possible as well.
To demonstrate this adaptability, we use the Snake problem as a representative example, requiring only two minor adjustments:
\subsubsection{General Overlap Rejection} 
For LSP, paths must avoid the opposing visited body ($tail$). Because Snake requires an induced (chordless) path, the overlap rejection condition (Line 12) must be broadened. The head of one state must not land on any \textit{illegal} vertices of the opposing state, which includes both the opposing $tail$ and all of its neighbors.
\subsubsection{Immediate Backtracking on Adjacent Meets} 
In LSP, the search continues after an adjacent meet to find potentially longer detours. For Snake, however, the algorithm must backtrack immediately by inserting a \textbf{return} statement after line 17, since any further extensions would instantly yield an illegal chord. Therefore, after updating the incumbent, no strictly longer valid snake can exist from these prefixes, and the branch should be pruned.

\subsection{Theoretical Properties}
\label{subsec:theoretical_properties}

A crucial property of BiXDFBnB is that, because it uses a depth-first search strategy, its memory usage is \CRC{R1C1}{linear in the search depth $d$. Specifically, the memory complexity is $O(d \cdot b^2)$,  where $b$ is the maximum branching factor.} This provides a significant advantage over algorithms that maintain explicit open lists, whose memory requirements can scale exponentially.

Next, we establish the soundness, completeness, and optimality of BiXDFBnB, as well as its meet-in-the-middle property. A bidirectional algorithm is particularly vulnerable to generating cyclic paths or bypassing the optimal solution due to search frontiers crossing illegally. We show that the synchronized Cartesian expansion inherently resolves these parity issues.

\begin{lemma}[Soundness]\label{lem:soundness}
BiXDFBnB only generates valid, simple $s$-$t$ paths.
\end{lemma}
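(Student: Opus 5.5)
The plan is to prove an invariant that holds for every recursive call $\mathit{BiXDFBnB}(s_F,s_B)$ that gets past the pruning test (line 5). Soundness then follows by inspecting the only two places where $S$ is assigned: the Exact Meet (line 9) and the Adjacent Meet (line 16).

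\emph{Invariant.} On entry to a call:
\begin{enumerate}
\item[(i)] $s_F.\pi$ is a simple path in $G$ starting at $s$, and $s_B.\pi$ is a simple path in $G$ starting at $t$;
\item[(ii)] the vertex sets of $s_F.\pi$ and $s_B.\pi$ are disjoint, except possibly that $s_F.head=s_B.head$.
\end{enumerate}

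\emph{Base case.} The root call is on $(\mathit{MkNd}(s),\mathit{MkNd}(t))$. Both are single-vertex paths, so (i) holds trivially. For (ii), either $s\neq t$ and the sets are disjoint, or $s=t$ and the only common vertex is the shared head.

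\emph{Inductive step.} A child call is made only at line 19. That line is reached only if the current call did not return at line 11 (Exact Meet) or at line 13 (Overlap Rejection).
\begin{itemize}
\item For (i): $\Gamma(\cdot)$ generates valid successors in the GLSP state space, i.e.\ paths extended by one edge to a vertex not excluded by $L$ in the remaining graph $G_r$. So $succ_F.\pi$ and $succ_B.\pi$ are again simple paths from $s$ and from $t$.
\item For (ii): this is the delicate part. Write $u=succ_F.head$ and $v=succ_B.head$. Disjointness of the parents' vertex sets follows from the induction hypothesis, together with the fact that the parent heads are distinct (an exact meet would have returned). What the induction does not rule out is that the child call has $u\in s_B.\pi$, $v\in s_F.\pi$, or $u=v$.
\item This is exactly what lines 7 and 12 of the child call check, before any assignment to $S$ and before any further expansion. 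So the right formulation is that (ii) holds \emph{after} the checks at lines 7 and 12 in each call, not on entry.
\item One subtle case is $u=s_B.head$ or $v=s_F.head$, i.e.\ a head steps onto the opposite head. This is where the Adjacent Meet matters. In that situation the parent heads were adjacent, the bridged solution was already recorded at line 16, and the child sees $u\in succ_B.tail$, so it is rejected at line 12. I must verify that this membership test covers the parent head, since the tail of a successor includes its parent's head.
\item The case where the two heads swap positions is caught the same way.
\end{itemize}

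\emph{Assignments to $S$.}
\begin{itemize}
\item Exact Meet: $s_F.head=s_B.head=x$. Then $s_F.\pi\cdot\overline{s_B.\pi}$ runs from $s$ through $x$ to $t$. By (i) each part is simple, and by (ii) they share only $x$, which is identified once in the concatenation. Hence the result is a simple $s$-$t$ path.
\item Adjacent Meet: the heads are distinct, neither lies in the other's tail (line 12 was passed), and $(s_F.head,s_B.head)\in E$. The concatenation uses this edge as a bridge and, by (ii), has no repeated vertex.
\end{itemize}

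\emph{Generalization to GLSP.} The same argument goes through with ``vertex of the opposing path'' replaced by ``element forbidden by $L$ from the opposing path'', as in the broadened overlap check. For snakes, the backtracking after an adjacent meet ensures that no chord is introduced after the bridge.

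\emph{Expected main obstacle.} I expect the main difficulty to be making (ii) precise under simultaneous expansion. The invariant is only re-established one call later, by the child's own checks, so it has to be stated relative to the position of those checks and not naively on entry. The argument also depends on the tail-membership test catching the head-swap configuration. I would first try to establish the invariant at the point just after line 13, and induct on recursion depth from there.
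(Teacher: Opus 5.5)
Your argument is correct, but it is organized differently from the paper's. The paper does a case analysis on a vertex $v$ shared by the two paths, relying on the synchronized growth $|\pi_F|=|\pi_B|=k$. Simultaneous arrival at $v$ triggers the Exact Meet. An attempted crossing of an edge in opposite directions is credited to the Adjacent Meet at the previous depth. Arrival at different depths puts $v$ in one tail, so Overlap Rejection fires.

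You instead carry an explicit invariant by induction on recursion depth: each half is simple, and the two vertex sets are disjoint except for a common head at an Exact Meet. You state it after the tests rather than on entry, and then check the two places where $S$ is assigned. Both proofs rest on the same core fact: each new head is compared against the entire opposing path, including the opposing parent head via tail membership.

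Your version gains three things over the paper's:
\begin{itemize}
\item It never uses equal path lengths, so it carries over essentially unchanged to the alternating and greedy policies.
\item It makes explicit what the paper takes for granted: simplicity of each half via $\Gamma$, disjointness of the tails inherited from the parent, the $s=t$ base case, and identifying the shared vertex only once in the Exact Meet concatenation.
\item It places the edge-crossing case correctly. In LSP the Adjacent Meet at the parent does not return, and the swapped child is actually eliminated at line 12 because $succ_F.head=s_B.head\in succ_B.tail$. The paper's case (2) is therefore an instance of its case (3).
\end{itemize}
The paper's framing is shorter and ties soundness to the parity picture it reuses in Lemma~\ref{lem:intersection}.

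One point to tighten: in an Exact Meet call, line 12 is never reached, so (ii) there does not come from ``the checks at lines 7 and 12.'' It follows from (i) instead. Since $u=v$ and $v$ is a fresh vertex for the backward path, $u\notin s_B.\pi$; symmetrically, $v\notin s_F.\pi$.
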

\begin{proof}
Candidate paths are only formed under two mutually exclusive conditions: Exact Meet and Adjacent Meet. 
Suppose a forward path $\pi_F$ and a backward path $\pi_B$ share a vertex $v$. Because the search expands both frontiers simultaneously via the Cartesian product, the paths grow at an identical rate: $|\pi_F| = |\pi_B| = k$ at any search depth $k$. Three distinct cases arise:

{\bf(1)} If both paths arrive at $v$ at the exact same depth $k$, the algorithm triggers an \textit{Exact Meet} ($s_F.head = s_B.head$), safely merges the paths, and backtracks, preventing further expansion. 

{\bf (2)}  If the frontiers try to cross each other by traversing the same edge in opposite directions simultaneously, they must have been positioned on adjacent vertices at the previous depth $k-1$. This scenario is perfectly intercepted by the \textit{Adjacent Meet} condition, avoiding illegal intersections.

{\bf (3)} Alternatively, if $\pi_F$ reaches $v$ at depth $k_1$ and $\pi_B$ reaches $v$ at depth $k_2$ where $k_1 < k_2$, then $v$ becomes part of $s_F.tail$. When the backward search lands on $v$ at depth $k_2$, the \textit{Overlap Rejection} condition ($s_B.head \in s_F.tail$) is triggered, immediately pruning the branch.

Consequently, it is topologically impossible to concatenate two overlapping paths, ensuring all generated paths are strictly simple. \end{proof}

\begin{lemma}[Intersection Guarantee]\label{lem:intersection}
For any valid simple $s$-$t$ path $P$, the simultaneous expansion strategy guarantees the search frontiers will trigger an Exact Meet or Adjacent Meet, provided the branches are not pruned.
\end{lemma}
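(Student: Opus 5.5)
Let $P=(v_0=s,v_1,\ldots,v_m=t)$ be a valid simple $s$-$t$ path with $m$ edges. We follow the single branch of the recursion tree that tracks $P$ from both ends. At depth $k$ this branch is the recursive call on the pair of states whose forward path is $(v_0,\ldots,v_k)$ and whose backward path is $(v_m,v_{m-1},\ldots,v_{m-k})$. Under the Cartesian expansion both paths have length $k$, so $s_F.head=v_k$ and $s_B.head=v_{m-k}$.

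First we show by induction on $k$ that this branch exists, as long as $k< m/2$ and $v_{k+1}$ and $v_{m-k-1}$ are not heads that have already met. The base case $k=0$ is the initial call $\mathit{BiXDFBnB}(\mathit{MkNd}(s),\mathit{MkNd}(t))$. For the step, suppose the call at depth $k$ is reached and not pruned by bound. Since $k<m-k$ and $P$ is simple, $v_k\neq v_{m-k}$, so no Exact Meet occurs. Because $P$ is simple, $v_k\notin\{v_{m-k+1},\ldots,v_m\}$ and $v_{m-k}\notin\{v_0,\ldots,v_{k-1}\}$, so Overlap Rejection does not fire. The extensions by $v_{k+1}$ and $v_{m-k-1}$ respect the exclusion constraint because $P$ does, so $v_{k+1}\in\Gamma(s_F)$ and $v_{m-k-1}\in\Gamma(s_B)$. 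The pair $(v_{k+1},v_{m-k-1})$ therefore appears in $\Gamma(s_F)\times\Gamma(s_B)$. The sorting only reorders the children, so the child call at depth $k+1$ is made. The one exception is the Snake variant after an Adjacent Meet, which backtracks; but in that case a meet has already been triggered.

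Next we split on the parity of $m$. If $m=2j$ is even, the branch reaches depth $j$ with $s_F.head=v_j=s_B.head$, which triggers an Exact Meet. If $m=2j+1$ is odd, then at depth $j$ the heads are $v_j$ and $v_{j+1}$. These are distinct, there is no overlap by the argument above, and $(v_j,v_{j+1})\in E$ is an edge of $P$, so the Adjacent Meet condition at line 14 fires. An Adjacent Meet may also fire earlier, whenever $(v_k,v_{m-k})\in E$ happens to be a chord. This still satisfies the lemma, although it need not produce $P$ itself. For the even case, we must also check that the branch is not cut off at depth $j-1$: there the heads $v_{j-1},v_{j+1}$ are distinct, and in LSP the search continues after an Adjacent Meet. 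This is exactly why line 17 does not return.

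The main obstacle is making precise the exact condition ``the branches are not pruned''. The bound test at line 5 could cut the branch, and for Snake the early backtrack after an Adjacent Meet could too. We read the hypothesis as saying that none of the calls on this tracked branch is pruned at line 5. We also treat the Snake early return as itself a triggered meet, which discharges the claim. A further subtle point is making sure the Overlap Rejection and GLSP constraint checks never reject prefixes of a valid $P$. For this we rely on the fact that the tail and the constrained neighborhood of each prefix of $P$ are disjoint from the opposite prefix's head, because $P$ is valid. The Soundness lemma (Lemma~\ref{lem:soundness}) already shows that paths grow at identical rates, and we use that fact for the depth bookkeeping.
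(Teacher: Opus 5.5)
Your proof is correct and follows the paper's argument: you track the prefix/suffix pair of $P$ in lockstep under the Cartesian expansion, then split on the parity of $m$ to get an Exact Meet at the middle vertex or an Adjacent Meet across the middle edge. Your induction showing that the tracked branch survives the Exact-Meet, Overlap-Rejection and successor checks up to depth $\lfloor m/2\rfloor$ makes rigorous what the paper asserts in one sentence; for Snake the early-return worry is actually vacuous, since a valid (induced) $P$ has no chord $(v_k,v_{m-k})$ before the middle.
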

\begin{proof}
Let $P$ be a simple $s$-$t$ path of length $L$ (consisting of $L$ edges). Because the Cartesian product $\Gamma(s_F) \times \Gamma(s_B)$ is explored at each step, the algorithm synchronously evaluates prefixes and suffixes of $P$, expanding by exactly one edge per prefix and one edge per suffix at a time.

\textbf{Even Length ($L = 2k$):} $P$ contains an exact middle vertex. At search depth $k$, both $s_F$ and $s_B$ will have traversed exactly $k$ edges, landing simultaneously on this middle vertex. This triggers the Exact Meet condition.
    
\textbf{Odd Length ($L = 2k + 1$):} $P$ does not have a middle vertex, but rather a middle edge. At search depth $k$, $s_F$ and $s_B$ will have traversed $k$ edges, landing on the respective endpoints of this middle edge. Because the graph $G$ is undirected, the bridging edge connecting them exists in $E$, triggering the Adjacent Meet condition.

Thus, parity cannot cause undetected crossings.
\end{proof}

\begin{corollary}
For state spaces with unit edge weights, the forward and backward search frontiers of BiXDFBnB intersect at a path length of $k$ or $k+1$ from their respective origin nodes, guaranteeing that the searches meet in the middle.
\end{corollary}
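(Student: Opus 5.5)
The plan is to derive the corollary directly from Lemma~\ref{lem:intersection}, together with the depth-synchronization fact used in the proof of Lemma~\ref{lem:soundness}. Under Simultaneous Expansion, every recursive call at depth $d$ has $|s_F.\pi| = |s_B.\pi| = d$. With unit edge weights, path length (number of edges) equals $g$-value, so ``distance from the origin'' is exactly the recursion depth.

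First fix an arbitrary valid $s$-$t$ path $P$ of length $L$ that is detected by the search, i.e., whose prefix/suffix branch is not pruned. Write $L = 2k$ or $L = 2k+1$ with $k = \lfloor L/2 \rfloor$. By Lemma~\ref{lem:intersection}, the pair of states holding the length-$k$ prefix and the length-$k$ suffix of $P$ is reached at depth $k$.
\begin{itemize}[nosep]
\item If $L$ is even, this pair triggers an Exact Meet. Both frontiers are then at distance exactly $k = L/2$ from their origins.
\item If $L$ is odd, it triggers an Adjacent Meet. The heads are at distance $k$ from their respective origins, the connecting edge supplies the remaining unit, and each head is within $k+1$ of the other origin.
\end{itemize}
In both cases the meeting occurs at path length $k$ or $k+1$, with $k \in \{\lfloor L/2\rfloor\}$ and $k+1 = \lceil L/2 \rceil$ in the odd case. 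This is precisely the midpoint of $P$.

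Second, I will argue that no solution is ever detected anywhere other than its midpoint. Every solution is produced only at an Exact or Adjacent Meet (lines 7--17). At such a call the depth $d$ satisfies $L = 2d$ or $L = 2d+1$ respectively, so $d = \lfloor L/2 \rfloor$. Moreover, a branch whose frontiers would pass each other is cut:
\begin{itemize}[nosep]
\item by the Exact Meet backtrack,
\item by Overlap Rejection (case 3 of Lemma~\ref{lem:soundness}), or
\item for paths that merely continue past an Adjacent Meet, by the fact that any later detection corresponds to a different, longer path, whose own midpoint is again its meeting depth.
\end{itemize}
Hence every meeting is a meet-in-the-middle.

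The main obstacle is making ``meet in the middle'' precise in the LSP case, where search continues after an Adjacent Meet, so frontiers may go deeper than $\lfloor C^*/2\rfloor$ for the optimal $C^*$. I would handle this by phrasing the claim per detected path, as above, rather than as a global depth bound. The optional global remark is that the search depth never exceeds $\lceil U/2\rceil$, where $U$ is the initial heuristic upper bound. This follows because each unpruned node needs $2d + h_{F2F} > |S| \ge 0$ and $h_{F2F}$ is admissible.
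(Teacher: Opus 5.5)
Your proof of the corollary is correct and is essentially the paper's argument. The paper just cites Lemmas~\ref{lem:soundness} and~\ref{lem:intersection}, and your observation makes that citation precise: since both paths always have length $d$ at depth $d$, any Exact or Adjacent Meet at depth $d$ certifies a path of length $2d$ or $2d+1$, so $d=\lfloor L/2\rfloor$.

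You should drop or repair your optional global remark, whose justification does not work. The inequality $2d+h_{F2F}>|S|\ge 0$ gives no upper bound on $d$, because admissibility bounds $h_{F2F}$ only from below. It places no constraint at all on pairs whose heads are disconnected in the remaining graph, so dead-end branches can run deeper than $\lceil U/2\rceil$ unless such pairs are pruned. A valid depth bound comes from a different argument: a node whose heads are joined by a path of length $\ell\ge 1$ in the remaining graph yields an $s$-$t$ path of length $2d+\ell\le C^*\le U$.
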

\begin{proof}
This follows directly from Lemmas 1 and 2.
\end{proof}

\begin{theorem}[Completeness and Optimality]
Given an admissible front-to-front heuristic $h_{F2F}$, BiXDFBnB is guaranteed to return a longest simple $s$-$t$ path.
\end{theorem}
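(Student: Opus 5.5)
Termination comes first: the recursion tree is finite. Each recursive call strictly lengthens both $s_F.\pi$ and $s_B.\pi$. Successors are generated only as valid simple extensions, so each path has at most $|V|$ vertices, and every branch has bounded depth. Every node also has finitely many children, namely $|\Gamma(s_F)|\cdot|\Gamma(s_B)|$. Hence BiXDFBnB terminates and returns the final incumbent $S$. By Lemma~\ref{lem:soundness}, every path ever stored in $S$ is a valid simple $s$-$t$ path, so the output (when nonempty) is feasible. It remains to show that $|S| \ge |P^*|$, where $P^*$ is a longest simple $s$-$t$ path; if no $s$-$t$ path exists, $S$ correctly stays empty.

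The core is an invariant argument that follows $P^*$ through the recursion tree. Let $|P^*| = L$ and write $L = 2k$ or $L = 2k+1$. For $i \le k$, let $N_i$ be the node whose forward state is the length-$i$ prefix of $P^*$ and whose backward state is the length-$i$ suffix (reversed, starting at $t$). $N_0$ is the root call on $(\mathit{MkNd}(s), \mathit{MkNd}(t))$.

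The claim, proved by induction on $i$, is: either $N_i$ is visited, or at the moment the search would have reached it, the incumbent already satisfies $|S| \ge L$. The inductive step has three parts.
\begin{enumerate}
\item For $i < k$, the heads of $N_i$ are distinct, non-adjacent through a meeting edge, and not in each other's tails, because $P^*$ is simple. So neither overlap rejection nor an early exact meet cuts the branch. For $L = 2k+1$, an adjacent meet at $i<k$ would need a chord; that only updates $S$ and does not stop expansion (for LSP).
\item The next vertices of $P^*$ in each direction are successors of the respective heads. So $N_{i+1}$ lies in the Cartesian product $\Gamma(s_F)\times\Gamma(s_B)$ and is eventually called, because sorting only reorders calls.
\item On the pruning test at line 5: at the time $N_i$ is called, admissibility of $h_{F2F}$ gives $h_{F2F}(N_i) \ge L - 2i$. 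The middle portion of $P^*$ is a simple path between the two heads that avoids both tails, so it lies in the remaining graph. Therefore $|s_F.\pi| + h_{F2F} + |s_B.\pi| \ge L$. If this node is pruned, then $|S| \ge L$ already holds, and since $S$ never decreases the claim persists.
\end{enumerate}

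At depth $k$, Lemma~\ref{lem:intersection} applies: $N_k$ triggers an Exact Meet (even $L$) or an Adjacent Meet (odd $L$) whose concatenated path is exactly $P^*$. That sets $|S| \ge L$ unless it already held. Combined with soundness this gives $|S| = L$, proving both completeness (a path is found whenever one exists, since $L \ge 0$ and the root is never pruned against an empty $S$) and optimality.

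I expect the main obstacle to be precision about what $h_{F2F}$ bounds and the off-by-one in path lengths. I must fix the convention that $|\pi|$ counts edges, so that $|s_F.\pi| + |s_B.\pi| = 2i$ and the remaining gap is $L - 2i$ edges. I must also check that the admissibility definition (longest valid path between the heads in the remaining graph after deleting both tails and the constrained elements) really covers the middle segment of $P^*$. For the Snake/GLSP variant, the same argument goes through with the broadened overlap check, plus the observation that the immediate return after an adjacent meet occurs only at depth $k$ along $P^*$. An earlier adjacent meet would contradict $P^*$ being chordless.
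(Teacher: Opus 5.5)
Your proposal is correct and follows the same route as the paper: Lemmas~\ref{lem:soundness} and~\ref{lem:intersection} guarantee that only valid paths are stored and that $P^*$ is detected once its depth-$k$ prefix/suffix pair is reached, and admissibility of $h_{F2F}$ shows that a pair along $P^*$ can be pruned only when $|S|\ge L^*$ already holds. Your version is more careful than the paper's: it adds termination, and it checks explicitly that overlap rejection, exact-meet returns, and (for Snake) the early return on adjacent meets never cut $P^*$'s branch before depth $k$, which the paper leaves implicit. One small fix is that a chord between the heads at some depth $i<k$ can occur for even $L$ as well as odd $L$, but your conclusion there (LSP only updates $S$ and keeps expanding) is unaffected.
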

\begin{proof}
By Lemmas \ref{lem:soundness} and \ref{lem:intersection}, the algorithm generates only valid paths and will correctly detect the optimal path $P^*$ (with maximum length $L^*$) if it is explored. It remains to show that prefixes of $P^*$ are never prematurely pruned.

A branch is pruned if and only if $|s_F.\pi| + h_{F2F}(s_F, s_B) + |s_B.\pi| \le |S|$. By definition, an admissible MAX heuristic guarantees that $h_{F2F}(s_F, s_B)$ is an upper bound on the length of any simple path connecting $s_F.head$ to $s_B.head$ using the remaining valid vertices. Therefore, the sum on the left side of the inequality represents an upper bound on the length of any path derivable from the current state pair. For $P^*$ to be pruned, this upper bound must be $\le |S|$, which implies $L^* \le |S|$. This condition is only met if the incumbent solution $S$ already has a length equal to or greater than that of $P^*$, guaranteeing that the search will ultimately return an optimal solution.
\end{proof}

\section{Analysis of BiXDFBnB}

\subsection{Relation to SFBDS and Expansion Policies}

Single-Frontier Bidirectional Search (SFBDS) departs from traditional bidirectional search by using a single frontier instead of two. Each node in the SFBDS search tree can be seen as an independent task of finding the shortest path between the current start and current goal. Likewise, our algorithm can be viewed as having a single frontier, where each node represents both a path from $s$ and a path from $t$, with the task of finding the longest simple path that has the former as a prefix and the latter as a suffix.

\CRC{AllReviews}{
One aim of SFBDS is to minimize search effort by choosing an appropriate {\em expansion policy} (AKA jumping policy), which decides which of two sides to expand next, choosing the direction with the highest potential for minimizing the total search effort.}
\CRC{AllReviews}{Three expansion policies are considered:}
\begin{itemize}
    
    \item \CRC{AllReviews}{\textbf{Alternating} (AKA {\em always-jump}): A simple baseline expansion policy that alternately expands the forward and the backward side. In domains with unit edge costs, like GLSP, this ensures that the search meets in the middle.}
    
    \item \CRC{AllReviews}{\textbf{Greedy} (AKA {\em 1-step lookahead}): Performs a 1-step lookahead by peeking at all the children of $x$ and measuring their heuristic towards $y$. Likewise, performs a 1-step lookahead by peeking at all the children of $y$ to measure their heuristic towards $x$. Finally, it  expands the side which contains the node with the largest heuristic value.}
    
    \item \CRC{AllReviews}{\textbf{Simultaneous} (AKA {\em 2-step lookahead}): Performs a 2-step lookahead by peeking at all the children of $x$ and all the children of $y$, and measuring their heuristic towards each other. Then, it expands the node that includes the child of $x$ and the child of $y$ with the largest heuristic value. This policy also ensures that the search meets in the middle in GLSP domains with unit edge costs.}
\end{itemize}


\subsubsection*{Theoretical Trade-offs}
\CRC{AllReviews}{
Each expansion policy balances node pruning against the computational overhead of heuristic evaluations, which scales with the branching factor ($b$). The \textbf{Alternating} policy is computationally cheap, requiring only $b$ heuristic computations per expansion. While it guarantees balanced search depth, its rigid expansion order means it cannot leverage heuristic guidance to dynamically prioritize the more promising frontier. The \textbf{Greedy} policy improves pruning by independently evaluating the immediate successors of both frontiers, requiring $2b$ heuristic computations per expansion. This local perspective helps bypass poor branches, though evaluating each frontier in isolation means it cannot account for the direct heuristic interactions between opposing successors. The \textbf{Simultaneous} policy maximizes pruning power by evaluating the Cartesian product of all forward and backward successors, steering the search toward the best front-to-front connections. It requires $b^2$ heuristic computations per expansion---a quadratic overhead that risks outweighing the runtime gains from reduced node generation.}

\subsubsection*{Empirical Comparison}
\CRC{AllReviews}{
We conducted preliminary experiments testing each expansion policy on two domains: LSP in Grids and Coil-in-the-Box (CIB). Results, presented in Tables \ref{tab:jumping_policies_lsp} and \ref{tab:jumping_policies_cib}, show that the \textbf{Simultaneous} expansion policy usually outperforms the alternatives. Despite its quadratic per-expansion overhead, its superior pruning power drastically reduces overall node expansions. Across both domains, this reduction consistently outweighs the higher evaluation cost, yielding the best runtimes and solving capabilities in nearly all tested configurations.}
\addtolength{\tabcolsep}{-0.138cm}

\begin{table}[h]
\centering
\begin{tabular}{llrrrr}
\hline

& & \multicolumn{2}{c}{20\%} & \multicolumn{2}{c}{12\%} \\ \cline{3-6}
& & Exp & Time [ms] & Exp & Time [ms] \\ \hline
\multirow{3}{*}{6x6} & Alternating & 107 & 52 & 252 & 143 \\
& Greedy & 170 & 190 & 417 & 518 \\
& Simultaneous & \textbf{52} & \textbf{36} & \textbf{132} & \textbf{97} \\ \hline
\multirow{3}{*}{7x7} & Alternating & 167 & 133 & 2,694 & 1,973 \\
& Greedy & 347 & 544 & 5,557 & 8,601 \\
& Simultaneous & \textbf{81} & \textbf{82} & \textbf{1,401} & \textbf{1,278} \\ \hline
\multirow{3}{*}{8x8} & Alternating & 5,102 & 3,972 & 78,490 & 71,793 \\
& Greedy & 10,632 & 18,258 & 156,085 & 278,222 \\
& Simultaneous & \textbf{2,707} & \textbf{2,470} & \textbf{39,373} & \textbf{44,301} \\ \hline
\end{tabular}
\caption{BiXDFBnB Jumping Policies on LSP in Grids}
\label{tab:jumping_policies_lsp}
\end{table}

\addtolength{\tabcolsep}{0.03cm}

\begin{table}[h]
  \centering
\begin{tabular}{llrrrr}
\hline
        & & 
\multicolumn{2}{c}{Finding} & \multicolumn{2}{c}{Proving} \\
\cline{3-6}
& & Exp & Time & Exp & Time \\
 \hline
\multirow{3}{*}{4D} & Alternating       & 3 & 2 & 3 & 2 \\
& Greedy  & 3 & 2 & 3 & 2 \\
& Simultaneous  & \textbf{2} & \textbf{1} & \textbf{2} & \textbf{1} \\
        \hline
\multirow{3}{*}{5D} & Alternating       & 9 & 16 & 17 & 28 \\
& Greedy  & 11 & 35 & 20 & 68 \\
& Simultaneous  & \textbf{5} & \textbf{11} & \textbf{9} & \textbf{26} \\
        \hline
\multirow{3}{*}{6D} & Alternating       & 788 & \textbf{2,462} & 3,748 & \textbf{14,998} \\
 & Greedy  & 722 & 2,551 & 6,703 & 45,673 \\
& Simultaneous  & \textbf{512} & 3,228 & \textbf{2,190} & 18,729\\
        \hline
\multirow{3}{*}{7D} & Alternating       & 14,543 & 125,952 & - & TO \\
 & Greedy  & 2,595,312 & 52,053,185 & - & TO \\
& Simultaneous  & \textbf{3,265} & \textbf{56,960} & - & TO \\
        \hline
    \end{tabular}
    \caption{BiXDFBnB Jumping Policies on Coil-in-the-Box}
    \label{tab:jumping_policies_cib}
\end{table}

\CRC{AllReviewers}{Given its expansions and runtime advantage, we adopt the \textbf{Simultaneous} expansion policy 
in BiXDFBnB experiments henceforth. Beyond the runtime benefits of massive node pruning, expanding both sides synchronously provides crucial algorithmic guarantees. As formally proven in Section \ref{subsec:theoretical_properties} (Theoretical Properties), this synchronized topological growth rigorously ensures that the optimal path will be intercepted - either via an exact vertex meet or an adjacent edge meet - effectively bypassing the parity vulnerabilities common in traditional bidirectional search. While our preliminary results show this policy dominates in GLSP and CIB, future work will explore whether these performance trends hold across a broader variety of domains and heuristic types.}


\subsection{F2F Heuristics in Bidirectional GLSP}
\CRC{R4C2}{
In bidirectional search for GLSP, a search state is defined as a pair $(s_F, s_B)$, where $s_F$ is the forward path starting from $s$, and $s_B$ is the backward path starting from $t$. The search progresses by extending either $s_F$ or $s_B$. In this context, the {\em remaining graph} $G_r(G, s_F.\pi \cup s_B.\pi)$ is defined as $G$ after removing (1) all vertices present in $s_F.tail \cup s_B.tail$, and (2) all other elements no longer allowed under the GLSP constraint $L$. Clearly, a paired state $(s_F, s_B)$ is not a dead end only if $s_F.head$ and $s_B.head$ are in the same connected component of $G_r(G, s_F.\pi \cup s_B.\pi)$. A F2F heuristic $h$ is said to be {\em admissible} for bidirectional GLSP iff for every paired state $(s_F, s_B)$ in the search space, $h(s_F, s_B)$ is {\em greater than or equal to} the weight of the longest constrained path (longest simple path in LSP) connecting $s_F.head$ to $s_B.head$ in $G_r(G, s_F.\pi \cup s_B.\pi)$. Crucially, because F2F heuristics evaluate the gap between the two frontiers, they can directly adapt existing F2E heuristics by treating the backward head, $s_B.head$, as if it were the fixed target $t$. 
}

\subsection{Examination of F2E vs. F2F}
Here we examine the behavior of front-to-end (F2E) and front-to-front (F2F) heuristics within the context of BiXDFBnB. We argue that F2F is naturally suited to this paired-state algorithmic structure (akin to SFBDS), whereas F2E proves inadequate. 
We subsequently compare this bidirectional approach to unidirectional DFBnB, supporting our claims with the experimental results.

\subsubsection{Front-to-Front Heuristics} There is a major structural difference between our DFBnB algorithm and a standard best-first bidirectional algorithm with two distinct open lists. With two distinct open lists, the search independently grows two frontiers towards opposing goals. When frontiers intersect, a candidate path is formed (in GLSP we still need to perform a validation check, as detailed above). 
A F2F heuristic in that context acts as a {\em more-informed F2E heuristic} by calculating estimates between the current state and \textit{all} states in the opposite frontier, selecting the best bound. 
While this provides stronger bounds than a F2E heuristic, the computational overhead of iterating over the entire opposing frontier typically outweighs the pruning benefits.

In contrast, the BiXDFBnB search operates on precisely one pair of states at a time, striving to bridge the gap between them. This affords F2F heuristics two decisive advantages. First, the distance calculation only needs to be executed between the two current heads, eliminating the quadratic algorithmic overhead. Second, the F2F heuristic calculates exactly what the paired search requires - a tight bound on the remaining path connecting the two specific states.

\subsubsection{Front-to-End Heuristics} In the context of our BiXDFBnB, using a F2E heuristic carries significant drawbacks. First, one must compute two independent heuristics (forward to $t$, backward to $s$) and aggregate them (taking the best bound), which is computationally wasteful. More fundamentally, these estimates ignore the constraints introduced by the opposing path blockages.

\smallskip
We can demonstrate the advantage of F2F heuristic over F2E heuristic in a DFBnB by looking at the remaining-graph heuristic ($G_r$). Suppose the forward and backward paths each have a $g$-value of 5. Using the F2E heuristic yields two independent bound calculations, where each remaining graph is constrained by only 5 visited vertices. Conversely, the true F2F heuristic evaluates a single remaining graph simultaneously constrained by all 10 visited vertices. Extracting a bound from a single, more severely depleted topology yields a consistently tighter admissible upper bound than taking the best of two less depleted topologies.

In unidirectional XDFBnB, only a single path is extended to the target, so the F2E and F2F heuristics converge by definition. For a mathematically fair comparison to a F2F heuristic where both paths have a length of 5, consider a unidirectional search that has reached depth 10. The remaining graph likewise drops 10 vertices. Despite matching vertex reduction, the bidirectional F2F heuristic frequently generates stronger bounds than the single-front heuristic. The bidirectional search isolates two distinct fragments of the environment, resulting in a significantly more fragmented remaining graph topology than a single, continuous 10-step path localized to one side. Given that heuristics like $h_{BCC}$ heavily exploit graph fragmentation, F2F pruning becomes substantially harder to overcome.


The empirical evaluation (Section \ref{sec:emp}) confirms the theoretical advantages of F2F over F2E in BiXDFBnB, as well as BiXDFBnB's merits relative to unidirectional XDFBnB and other state-of-the-art approaches. 
Across both the LSP Grid and Maze domains (Tables 1 and 2), the BiXDFBnB F2F algorithm required orders of magnitude fewer node expansions and drastically lower wall-clock times compared to BiXDFBnB F2E. Additionally, BiXDFBnB F2F dominates XDFBnB in most metrics across all instances. This supports our reasoning that iteratively growing two opposing paths rather than extending a single long path breaks apart the remaining graph topology far more intensely, triggering earlier and more effective heuristic cutoffs.

\section{Empirical Evaluation}
\label{sec:emp}

\paragraph{Goals.}
Our evaluation addresses three main questions:
(1) Does the use of a front-to-front (F2F) heuristic reduce node expansions compared to a front-to-end (F2E) heuristic in simultaneous bidirectional DFBnB?
(2) Do reductions in node expansions translate into runtime improvements?
(3) How does our approach compare to strong unidirectional and bidirectional baselines?

\paragraph{Domains.}
The evaluation includes the following domains:

\noindent \textbf{(1) LSP in Grids}. Here we find a longest simple path from $s$ (top left corner) to $t$ (bottom right corner) of a 2D grid. Grid sizes were all integer size combinations between $6\times6$ and $8\times8$ (a total of 6 combinations). For each size, we randomly set 8\%, 12\%, 16\%, and 20\% of the cells as obstacles, creating 10 random problem instances for each density (yielding 240 instances total). The heuristic used was $h_{BCC}$. 

\noindent \textbf{(2) Snake in Grids.} In this domain, we search for the longest snake in grids spanning all integer size combinations from $7\times7$ to $9\times9$ (6 combinations). The random instance-generation method mirrored the one used for LSP in grids, yielding another 240 total instances. For this domain, a combination of the $h_{BCC}$, $h_{SH}$, and $h_{Y}$ heuristics was utilized.

\noindent \textbf{(3) LSP in Mazes.} \CRC{Mazes}{This domain shares the same setup as LSP in grids, except that the initial grid is the original maze used by \citet{DBLP:conf/socs/DahanTSD24}, which serves as our single baseline instance (0 open diamonds). We generated increasingly difficult configurations by clearing randomly placed diamond-shaped areas within this base maze (see Figure \ref{fig:Diamond}). These areas are devoid of obstacles and create massive local state spaces. Specifically, we evaluated 10 random instances with 1 open diamond and 10 random instances with 2 open diamonds, utilizing the $h_{BCC}$ heuristic here as well.}

\noindent \textbf{(4) Coil-in-the-Box (CIB).} Here, we find a coil-in-the-box simple cycle in a hypercube, with experiments spanning 4D, 5D, 6D, and 7D. Because the first four steps in SIB and CIB are forced due to symmetry, they can be safely assumed and dropped from the search. Consequently, we define our start and target vertices as $s=(0,\dots,1,1,1,1)$ and $t=(0,\dots,0,0,0,0)$. Heuristics were again a combination of $h_{BCC}$, $h_{SH}$, and $h_{Y}$.

\paragraph{Algorithms.}
Compared algorithms are: unidirectional A*, bidirectional best-first search (XMM), unidirectional depth-first branch-and-bound (XDFBnB), and BiXDFBnB with both front-to-end (F2E) and front-to-front (F2F) heuristics. 

As unidirectional search performance can be highly asymmetric in some graph topologies, we execute all unidirectional baselines in both forward ($s \rightarrow t$) and backward ($t \rightarrow s$) directions. Unless explicitly separated in  data tables (as  in the Maze domain), the reported values for unidirectional algorithms are the best performance out of the two directions, ensuring the strongest  baseline comparison.

\paragraph{Experimental Setup.}
The evaluation was conducted using a 92-machine cluster, where
each machine possesses an AMD EPYC 7763 CPU with 256 hyper-threads and 1000GB of RAM. Each combination of search algorithm
and problem instance was allocated a single thread and a memory
limit of 60GB.
All algorithms were implemented in Python. 
This section highlights the main trends and representative results. Complete results, including both node expansions and runtimes for all instances appear in \cite{fullpaper}. The code and data are publicly available \cite{code}.

\subsection{Impact of Front-to-Front Heuristics}
The central claim of this paper - that F2F heuristics are vastly superior to F2E heuristics in the context of simultaneous DFBnB - is definitively validated across all domains. Table \ref{tab:f2e_comparison} presents a representative highlight of these results. As shown, \textbf{BiXDFBnB F2E} struggles significantly in environments with large open spaces, performing worse than or marginally comparable to traditional baselines. However, upgrading the heuristic to \textbf{F2F} reduces expansions by up to two orders of magnitude. Because the simultaneous expansion policy collapses the search to exactly one active pair of states, the quadratic overhead traditionally associated with F2F calculations is eliminated, translating this massive reduction in expansions directly into faster runtimes. 
The complete dataset comparing F2E and F2F across all tested domains and configurations appears in 
\cite{fullpaper}.

\begin{table}[t]
    \centering
    \small
    \setlength{\tabcolsep}{4pt}
    \begin{tabular}{@{}lrr|rr@{}}
        \toprule
        & \multicolumn{2}{c}{\textbf{Expansions}} & \multicolumn{2}{c}{\textbf{Runtime[ms]}} \\
        \cmidrule(lr){2-3} \cmidrule(lr){4-5}
        \textbf{Instance} & \textbf{F2E} & \textbf{F2F} & \textbf{F2E} & \textbf{F2F} \\
        \midrule
        Grid LSP $7\times8$ (12\%) &  165,241 &  \textbf{10,164} & 370,323 & \textbf{9,814} \\
        Grid Snake $7\times8$ (12\%)  &  7,430 & \textbf{299} & 59,729 & \textbf{1,528} \\
        Maze $od=1$             & 208,794 & \textbf{3,279} & 1,031,820 & \textbf{6,603} \\
        CiB 6D             & 163,561 & \textbf{2,190} & 1,375,600 & \textbf{18,729} \\
        \bottomrule
    \end{tabular}
    \caption{Comparison between F2E and F2F variants of BiXDFBnB in terms of node expansions and runtime.}\label{tab:f2e_comparison}
\end{table}

\subsection{Scalability in High-Dimensional CIB}
The Coil-in-the-Box (CIB) problem - finding the longest constrained cycle in an $n$-dimensional hypercube - serves as a pure test of exponential scalability. We evaluate the algorithms on dimensions 4D through 7D. 

A critical consideration in F2F heuristics is the trade-off between the time spent calculating a more informed heuristic and the time saved by expanding fewer nodes. Figure \ref{fig:cib_comparison} illustrates this relationship. While the F2F calculation is slightly more expensive per node, the exponential reduction in the number of nodes expanded leads to a drastic improvement in overall wall-clock time. In the 7D hypercube, BiXDFBnB F2F finds the optimal length in only \textbf{56 seconds}, whereas the F2E variant and A* fail to complete within a 12-hour time limit. The full numerical results for all CIB dimensions are provided in \cite{fullpaper}.

\begin{figure}[t]
    \centering
    \ref{shared_cib_legend}
    \vspace{2mm}
    \begin{tikzpicture}
        \begin{axis}[
            ybar, ymode=log,
            symbolic x coords={5D, 6D}, xtick=data,
            ylabel={Runtime (ms) - Log},
            xlabel={Dimension},
            bar width=5pt, width=0.48\columnwidth, height=4.5cm,
            enlarge x limits=0.35,
            area legend,
            legend columns=-1, 
            legend style={font=\footnotesize, /tikz/every even column/.append style={column sep=0.3cm}},
            legend to name=shared_cib_legend,
        ]
        \addplot coordinates {(5D,145) (6D,19669)}; 
        \addplot coordinates {(5D,88) (6D,34001)}; 
        \addplot coordinates {(5D,50) (6D,7018)};  
        \addplot[fill=black] coordinates {(5D,9) (6D,2190)}; 
        \legend{A*, XMM, XDFBnB, BiX(F2F)}
        \end{axis}
    \end{tikzpicture}
    \hfill
    \begin{tikzpicture}
        \begin{axis}[
            ybar, ymode=log,
            symbolic x coords={5D, 6D}, xtick=data,
            ylabel={Expansions - Log},
            xlabel={Dimension},
            bar width=5pt, width=0.48\columnwidth, height=4.5cm,
            enlarge x limits=0.35,
        ]
        \addplot coordinates {(5D,138) (6D,48625)}; 
        \addplot coordinates {(5D,161) (6D,120859)}; 
        \addplot coordinates {(5D,73) (6D,22085)};  
        \addplot[fill=black] coordinates {(5D,23) (6D,18729)}; 
        \end{axis}
    \end{tikzpicture}
\caption{Coil-in-the-Box results. \CRC{R1MC3}{F2F's expansion reduction (right) directly enables its runtime advantage (left). }}
    \label{fig:cib_comparison}
\end{figure}


\begin{figure}[ht]
    \centering
    \begin{tikzpicture}
        \begin{axis}[
            xlabel={Obstacle Percentage (Fewer obstacles = Harder)},
            ylabel={Average Expansions},
            xtick={8, 12, 16, 20},
            x dir=reverse,
            legend pos=north west, 
            legend style={font=\scriptsize,legend columns=2},
            width=0.98\columnwidth,
            height=5cm,
            grid=major
        ]
        
        \addplot[mark=square*, blue, error bars/.cd, y dir=both, y explicit] 
            coordinates {
                (20, 204) +- (0, 92) 
                (16, 490) +- (0, 290) 
                (12, 1619) +- (0, 1020) 
                (8, 2704) +- (0, 1635)
            };
            
        \addplot[mark=triangle*, red, error bars/.cd, y dir=both, y explicit] 
            coordinates {
                (20, 218) +- (0, 96) 
                (16, 495) +- (0, 353) 
                (12, 1477) +- (0, 813) 
                (8, 2558) +- (0, 1404)
            };
            
        \addplot[mark=diamond*, orange, error bars/.cd, y dir=both, y explicit] 
            coordinates {
                (20, 442) +- (0, 385) 
                (16, 666) +- (0, 613) 
                (12, 2950) +- (0, 2348) 
                (8, 4640) +- (0, 2570)
            };
            
        \addplot[mark=*, black, thick, error bars/.cd, y dir=both, y explicit] 
            coordinates {
                (20, 114) +- (0, 81) 
                (16, 293) +- (0, 171) 
                (12, 653) +- (0, 358) 
                (8, 1717) +- (0, 588)
            };
            
        \legend{A*, XMM, XDFBnB, BiX(F2F)}
        \end{axis}
    \end{tikzpicture}
    \caption{Average and standard deviation of expansions for Snake in $8\times8$ Grids with varying obstacle densities. 
    }
    \label{fig:grid_expansions}
\end{figure}

\begin{table*}[t] 
    \centering
    
    \begin{minipage}{0.7\textwidth} 
        \centering
        \small
        \begin{tabular}{l|rr|rr|rr}
        \hline
        \textbf{Algorithm} & \multicolumn{2}{c|}{\textbf{0 Open Diamonds}} & \multicolumn{2}{c|}{\textbf{1 Open Diamond}} & \multicolumn{2}{c}{\textbf{2 Open Diamonds}} \\
        \cline{2-7}
        \textbf{} & Expansions & Time[ms] & Expansions & Time[ms] & Expansions & Time[ms] \\
        \hline
        \textbf{A* s$\rightarrow$t} & 1,162 & 2,934 & 30,949 & 77,636 & OOM & OOM \\
        \textbf{A* t$\rightarrow$s} & 657 & 1,444 & 55,400 & 119,738 & OOM & OOM \\
        \textbf{XMM} & 1,131 & 3,377 & 59,903 & 163,786 & OOM & OOM \\
        \textbf{XDFBnB s$\rightarrow$t} & 1,302 & 2,423 & 44,473 & 104,125 & 1,470,196 & 3,540,864 \\
        \textbf{XDFBnB t$\rightarrow$s} & 888 & 1,670 & 89,202 & 184,637 & 3,288,267 & 7,379,854 \\
        \textbf{BiXDFBnB F2E} & 4,606 & 19,859 & 515,170 & 2,461,587 & TO & TO \\
        \textbf{BiXDFBnB F2F} & \textbf{501} & \textbf{948} & \textbf{12,817} & \textbf{27,751} & \textbf{187,759} & \textbf{433,233} \\
        \hline
        \end{tabular}
        \caption{LSP in Mazes results.}
        \label{tab:LSPmazes}
    \end{minipage}\hfill 
    \begin{minipage}{0.25\textwidth} 
        \centering
        \includegraphics[width=0.8\linewidth]{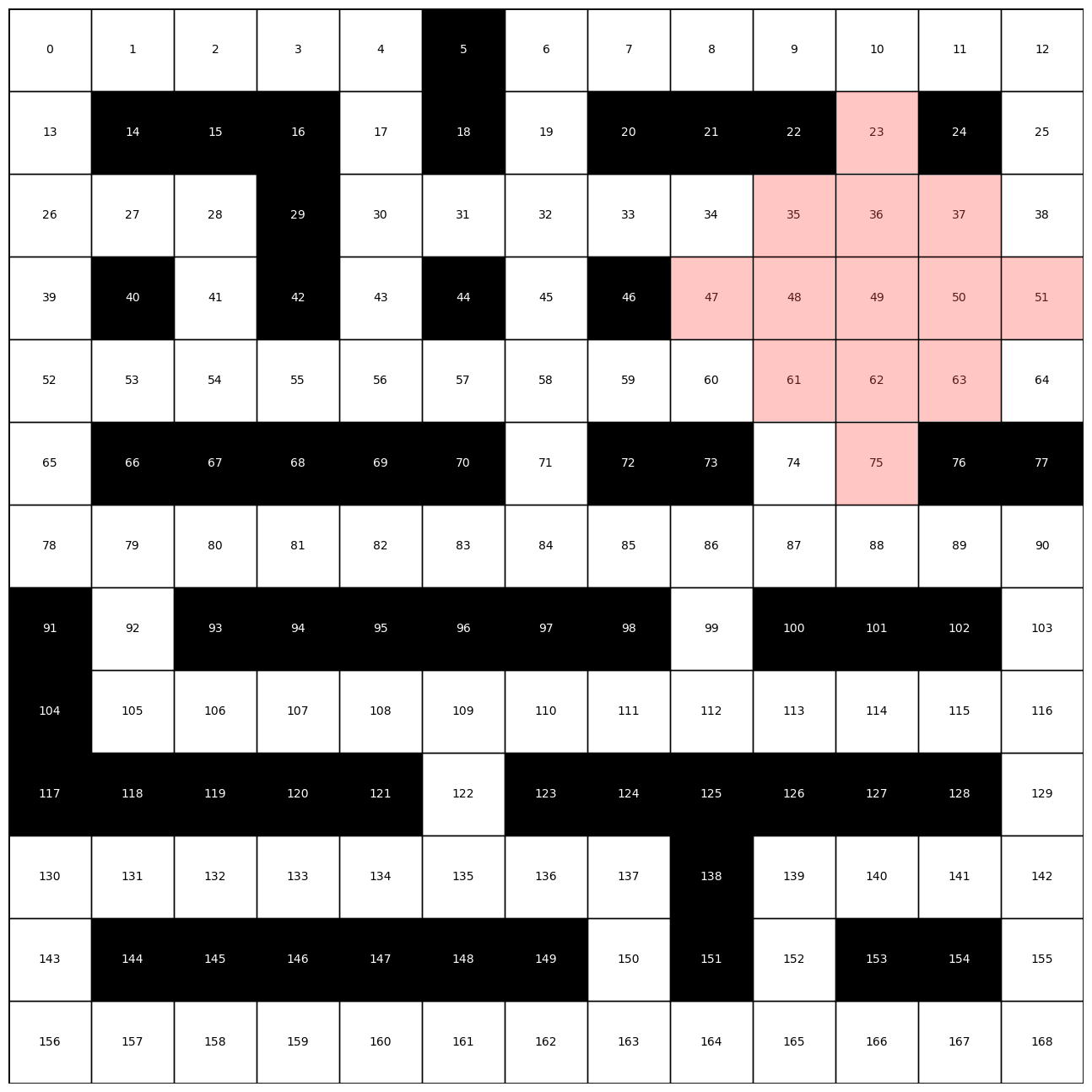}
        \captionof{figure}{Maze + 1 Diamond} 
        \label{fig:Diamond}
    \end{minipage}
    
\end{table*}

\subsection{Performance on Grids and Mazes}

For 2D environments, we evaluated both the standard Longest Simple Path (LSP) and the Snake problem (which strictly requires an induced path with no adjacent shortcuts) across square and rectangular grids seeded with varying percentages of obstacles ($20\%, 16\%, 12\%, 8\%$). As the number of obstacles decreases, the branching factor increases, making the state-space significantly harder to search.

\textbf{LSP in Grids:} Across all tested LSP grid instances, BiXDFBnB F2F consistently outperformed BiXDFBnB F2E, XDFBnB and the best-first baselines in both expansions and runtime. The complete tabular results for the LSP grid experiments can be found in \cite{fullpaper}.

\textbf{Snake in Grids:} The added induced-path constraints of the Snake variant naturally restrict the overall state space compared to pure LSP, but the relative performance gap between F2F and the baselines remains consistent across both problem types. Figure \ref{fig:grid_expansions} illustrates the representative node expansions on $8\times8$ grids for the Snake problem. BiXDFBnB F2F dominates across all densities,
leveraging its synchronized F2F pruning to significantly reduce the runtime.


\textbf{LSP in Mazes:} 
\CRC{Mazes}{
As shown in Table \ref{tab:LSPmazes}, our proposed algorithm dominates all other algorithms in both expansions and average runtime. The large state space in the 2 Open Diamonds instances caused the best-first algorithms A* and XMM to run out of memory and BiXDFBnB F2E to time out. BiXDFBnB F2F outperforms even the best-performing unidirectional search, reducing average expansions by a factor of four and runtime by nearly an order of magnitude. 
To determine the statistical significance of the performance differences, we conducted a one-sided Wilcoxon signed-rank test ($\alpha = 0.05$). This non-parametric method was selected to account for the heavily skewed distributions typical of search algorithm metrics. Across both the 1-Open and 2-Open Diamond configurations, the bidirectional algorithm yielded statistically significant improvements over both the $s \to t$ and $t \to s$ unidirectional variants in both total expansions and execution time. With all test pairings resulting in $p$-values below $0.005$ (Wilcoxon test statistics $W \le 3.0$), we confidently reject the null hypothesis, confirming that bidirectional search requires significantly fewer expansions and less runtime in these environments.
}

\subsection{Summary of Results}
Across all empirical domains, BiXDFBnB F2F emerges as the state-of-the-art solver for GLSP. By leveraging a depth-first branch-and-bound framework, it eliminates the exponential memory constraints of A* and XMM. Furthermore, by expanding nodes simultaneously, it mathematically guarantees path parity while unlocking the massive pruning power of front-to-front heuristics. This unique combination reduces node expansions by up to two orders of magnitude and translates directly into superior overall runtimes, rendering the traditional F2F computational overhead negligible.

\section{Conclusion and Future Work}

In this paper, we introduced BiXDFBnB, a novel bidirectional depth-first branch-and-bound algorithm designed for the Generalized Longest Simple Path (GLSP) problem. By evaluating opposing bounds simultaneously, our approach successfully integrates front-to-front (F2F) heuristics into a single-frontier search framework. While F2F heuristics are typically avoided in standard bidirectional search due to prohibitive computational overhead, our empirical evaluation demonstrates that in the context of GLSP - where state representation and path verification are inherently costly - F2F evaluations become not only viable but highly advantageous. BiXDFBnB effectively mitigates exponential memory bottlenecks that plague best-first algorithms such as A* and XMM, usually achieving reductions in both node expansions and overall runtime across Grids, Mazes, Snake-in-the-Box, and Coil-in-the-Box (CIB) domains.

A highly promising future research direction is  exploring dynamic jumping policies. Our current implementation relies on a static, alternating policy to balance the forward and backward search frontiers. However, preliminary evaluations using 1-step and double-lookahead policies suggest that dynamically selecting the expansion direction based on maximum pruning potential can significantly reduce node expansions. Fully integrating and optimizing these dynamic lookahead policies - without incurring the prohibitive runtime overhead observed in higher dimensions - remains an open challenge. Importantly, this direction is also highly relevant to Single-Frontier Bidirectional Search (SFBDS) in minimization problems, providing a broader context to explore advanced expansion strategies. Additionally, adapting the BiXDFBnB F2F framework to parallel processing architectures or extending it to other constrained maximization problems presents an exciting opportunity for further study.


\section*{Acknowledgments}
Supported by the Israel Science Foundation (ISF)
grant \#909/23 and by the Frankel center for CS at BGU.
\bibliography{LSP}

\clearpage
\appendix
\onecolumn
\section{Supplementary Materials: Full Results Data}
\label{appendix:A}

This appendix contains the comprehensive, instance-by-instance empirical results that summarize the aggregated plots and highlights discussed in Section \ref{sec:emp}. The tables below provide the absolute runtime (in milliseconds) and total node expansions required to prove the optimal solution across all tested algorithms. 

\subsection{Longest Simple Path in Grids}
The following table details the performance across grid dimensions under varying obstacle densities (8\%, 12\%, 16\%, 20\%). 

\begin{table*}[ht!]
\centering
\resizebox{0.9\textwidth}{!}{ 

\begin{tabular}{c|l|rr|rr|rr|rr}
\hline
\textbf{Size} & \textbf{Algorithm} & \multicolumn{2}{c|}{\textbf{20\%}} & \multicolumn{2}{c|}{\textbf{16\%}} & \multicolumn{2}{c|}{\textbf{12\%}} & \multicolumn{2}{c}{\textbf{8\%}} \\ \cline{3-10}
\textbf{} & \textbf{} & Expansions & Time[ms] & Expansions & Time[ms] & Expansions & Time[ms] & Expansions & Time[ms] \\ \hline
\multirow{5}{*}{\textbf{6x6}} & \textbf{A*} & 138 & 119 & 238 & 223 & 342 & 326 & 1,315 & 1,545 \\
& \textbf{XMM} & 81 & 94 & 128 & 159 & 241 & 271 & \textbf{701} & 894 \\
& \textbf{XDFBnB} & 97 & 51 & 174 & 91 & 194 & 124 & 1,168 & \textbf{660} \\
& \textbf{BiX-DFBnB F2E} & 308 & 430 & 368 & 578 & 498 & 757 & 5,130 & 8,154 \\
& \textbf{BiX-DFBnB F2F} & \textbf{52} & \textbf{36} & \textbf{91} & \textbf{60} & \textbf{132} & \textbf{97} & 1,007 & 673 \\ \hline
\multirow{5}{*}{\textbf{6x7}} & \textbf{A*} & 140 & 140 & 317 & 344 & 566 & 630 & 4,264 & 6,237 \\
& \textbf{XMM} & 100 & 124 & 264 & 335 & 454 & 580 & \textbf{2,052} & \textbf{3,277} \\
& \textbf{XDFBnB} & 84 & 64 & 162 & 114 & 340 & 235 & 7,532 & 4,520 \\
& \textbf{BiX-DFBnB F2E} & 385 & 588 & 419 & 704 & 1,262 & 2,286 & 63,120 & 121,083 \\
& \textbf{BiX-DFBnB F2F} & \textbf{64} & \textbf{53} & \textbf{97} & \textbf{81} & \textbf{221} & \textbf{208} & 4,050 & 3,400 \\ \hline
\multirow{5}{*}{\textbf{6x8}} & \textbf{A*} & 260 & 300 & 1,037 & 1,310 & 1,397 & 1,909 & 15,027 & 24,037 \\
& \textbf{XMM} & 124 & 170 & \textbf{728} & 1,165 & 848 & 1,290 & \textbf{6216} & \textbf{11,306} \\
& \textbf{XDFBnB} & 129 & 93 & 978 & 666 & 1216 & 860 & 35,500 & 22,108 \\
& \textbf{BiX-DFBnB F2E} & 267 & 457 & 5,397 & 9,766 & 2,047 & 4,083 & 308,380 & 637,224 \\
& \textbf{BiX-DFBnB F2F} & \textbf{68} & \textbf{58} & 844 & \textbf{634} & \textbf{562} & \textbf{539} & 18,200 & 15,417 \\ \hline
\multirow{5}{*}{\textbf{7x7}} & \textbf{A*} & 255 & 278 & 1,191 & 1,612 & 1,990 & 2,865 & 10,694 & 19,289 \\
& \textbf{XMM} & 177 & 252 & \textbf{587} & 916 & 1,448 & 2,498 & \textbf{5,192} & \textbf{10,400} \\
& \textbf{XDFBnB} & 154 & 106 & 1,058 & 734 & 1,661 & \textbf{1,230} & 26,056 & 17,623 \\
& \textbf{BiX-DFBnB F2E} & 304 & 537 & 5,070 & 9,553 & 10,301 & 20,962 & 194,238 & 411,556 \\
& \textbf{BiX-DFBnB F2F} & \textbf{81} & \textbf{82} & 719 & \textbf{630} & \textbf{1,401} & 1,278 & 13,492 & 12,166 \\ \hline
\multirow{5}{*}{\textbf{7x8}} & \textbf{A*} & 444 & 598 & 2,398 & 3,653 & 10,052 & 17,987 & 18,797 & 38,451 \\
& \textbf{XMM} & 290 & 448 & 1,966 & 3,526 & \textbf{6,658} & 13,512 & \textbf{11,738} & \textbf{26,391} \\
& \textbf{XDFBnB} & 292 & 255 & \textbf{1,185} & \textbf{996} & 16,928 & 13,019 & 75,606 & 56,350 \\
& \textbf{BiX-DFBnB F2E} & 1,020 & 2,107 & 9,722 & 20,825 & 165,241 & 370,323 & 788,754 & 1,810,950 \\
& \textbf{BiX-DFBnB F2F} & \textbf{187} & \textbf{204} & 1,283 & 1,311 & 10,164 & \textbf{9,814} & 45,628 & 43,688 \\ \hline
\multirow{5}{*}{\textbf{8x8}} & \textbf{A*} & 3,915 & 6,794 & 5,668 & 10,021 & 30,562 & 59,140 & 103,223 & 235,225 \\
& \textbf{XMM} & \textbf{1,484} & 2,920 & \textbf{2,158} & \textbf{4,077} & \textbf{11,247} & \textbf{24,951} & \textbf{51,885} & \textbf{134,016} \\
& \textbf{XDFBnB} & 2,364 & \textbf{1,992} & 6,525 & 5,524 & 47,121 & 41,467 & 267,658 & 280,918 \\
& \textbf{BiX-DFBnB F2E} & 27,536 & 59,561 & 62,407 & 150,406 & 991,999 & 2,388,260 & 5,255,810 & 13,219,500 \\
& \textbf{BiX-DFBnB F2F} & 2,707 & 2,470 & 4,226 & 4,470 & 39,373 & 44,301 & 225,468 & 242,148 \\ \hline
\end{tabular}
}
\caption{LSP in Grids results.}\label{tab:LSPgrids}
\end{table*}

\subsection{Snake in Grids}
The following table details the performance for the snake-in-the-box problem in grids under varying obstacle densities (8\%, 12\%, 16\%, 20\%). 
\begin{table*}[h]
\centering
\resizebox{0.9\textwidth}{!}{ 

\begin{tabular}{c|l|rr|rr|rr|rr}
\hline
\textbf{Size} & \textbf{Algorithm} & \multicolumn{2}{c|}{\textbf{20\%}} & \multicolumn{2}{c|}{\textbf{16\%}} & \multicolumn{2}{c|}{\textbf{12\%}} & \multicolumn{2}{c}{\textbf{8\%}} \\ \cline{3-10}
\textbf{} & \textbf{} & Expansions & Time[ms] & Expansions & Time[ms] & Expansions & Time[ms] & Expansions & Time[ms] \\ \hline
\multirow{5}{*}{\textbf{7x7}} & \textbf{A*} & 91 & 197 & 139 & 366 & 273 & 710 & 729 & 2,136 \\
& \textbf{XMM} & 82 & 277 & 164 & 548 & 282 & 1,031 & 781 & 2,787 \\
& \textbf{XDFBnB} & 122 & 282 & 210 & 520 & 380 & 1,013 & 1,089 & 3,070 \\
& \textbf{BiX-DFBnB F2E} & 228 & 1,391 & 590 & 3,837 & 1,350 & 10,140 & 8,873 & 65,588 \\
& \textbf{BiX-DFBnB F2F} & \textbf{50} & \textbf{187} & \textbf{65} & \textbf{282} & \textbf{96} & \textbf{480} & \textbf{239} & \textbf{1,269} \\ \hline
\multirow{5}{*}{\textbf{7x8}} & \textbf{A*} & 148 & \textbf{383} & 390 & 1,102 & 652 & 2,124 & 1,475 & 4,855 \\
& \textbf{XMM} & \textbf{118} & \textbf{418} & 422 & 1,479 & 636 & 2,558 & 1,470 & 6,100 \\
& \textbf{XDFBnB} & 246 & 674 & 533 & 1,660 & 1,168 & 3,420 & 2,690 & 8,738 \\
& \textbf{BiX-DFBnB F2E} & 386 & 2,867 & 3,048 & 23,413 & 7,430 & 59,729 & 29,174 & 248,566 \\
& \textbf{BiX-DFBnB F2F} & 344 & 668 & \textbf{221} & \textbf{1,060} & \textbf{299} & \textbf{1,528} & \textbf{662} & \textbf{3,707} \\ \hline
\multirow{5}{*}{\textbf{7x9}} & \textbf{A*} & 224 & \textbf{695} & 396 & \textbf{1,195} & 1,414 & 5,635 & 1,536 & \textbf{5,781} \\
& \textbf{XMM} & 235 & 1,002 & 384 & 1,597 & 1,264 & 6,265 & 1,354 & 6,675 \\
& \textbf{XDFBnB} & 404 & 1,194 & 1,085 & 3,051 & 1,818 & 7,090 & 2,517 & 9,819 \\
& \textbf{BiX-DFBnB F2E} & 1,638 & 13,989 & 4,658 & 38,418 & 32,868 & 332,449 & 43,188 & 434,607 \\
& \textbf{BiX-DFBnB F2F} & \textbf{172} & 774 & \textbf{384} & \textbf{1,502} & \textbf{983} & \textbf{5,420} & \textbf{1,036} & \textbf{5,989} \\ \hline
\multirow{5}{*}{\textbf{8x8}} & \textbf{A*} & 204 & \textbf{546} & 490 & 1,612 & 1,619 & 5,458 & 2,704 & 10,971 \\
& \textbf{XMM} & 218 & 828 & 495 & 2,143 & 1,477 & 6,565 & 2,558 & 12,418 \\
& \textbf{XDFBnB} & 442 & 1,329 & 666 & 2,380 & 2,950 & 10,040 & 4,640 & 17,980 \\
& \textbf{BiX-DFBnB F2E} & 959 & 7,515 & 4,251 & 37,252 & 32,001 & 271,845 & 95,620 & 878,808 \\
& \textbf{BiX-DFBnB F2F} & \textbf{114} & 605 & \textbf{293} & \textbf{1,538} & \textbf{653} & \textbf{4,085} & \textbf{1,717} & \textbf{9,948} \\ \hline
\multirow{5}{*}{\textbf{8x9}} & \textbf{A*} & \textbf{372} & \textbf{1,145} & 1,253 & \textbf{4,455} & 2,269 & \textbf{8,830} & 8,307 & \textbf{35,688} \\
& \textbf{XMM} & 387 & 1,718 & 1,337 & 6,288 & 2,414 & 11,974 & 8,587 & 47,070 \\
& \textbf{XDFBnB} & 720 & 2,571 & 1,937 & 7,736 & 7,117 & 27,761 & 19,675 & 89,784 \\
& \textbf{BiX-DFBnB F2E} & 2,484 & 23,323 & 21,664 & 206,814 & 94,239 & 922,858 & 599,936 & 6,219,780 \\
& \textbf{BiX-DFBnB F2F} & 397 & 2,172 & \textbf{957} & 5,837 & \textbf{1,817} & 10,962 & \textbf{6,635} & 46,277 \\ \hline
\multirow{5}{*}{\textbf{9x9}} & \textbf{A*} & 1,053 & \textbf{3,646} & 2,707 & \textbf{11,225} & 5,343 & \textbf{26,503} & 15,100 & \textbf{81,790} \\
& \textbf{XMM} & 1,108 & 5,474 & 2,563 & 14,025 & 5,062 & 29,994 & 14,422 & 90,991 \\
& \textbf{XDFBnB} & 1,731 & 7,155 & 7,860 & 31,482 & 10,704 & 52,565 & 38,607 & 196,782 \\
& \textbf{BiX-DFBnB F2E} & 14,556 & 170,965 & 89,826 & 1,029,920 & 429,787 & 5,187,210 & 2,291,846 & 27,265,805 \\
& \textbf{BiX-DFBnB F2F} & \textbf{604} & 4,318 & \textbf{2,299} & 14,934 & \textbf{4,166} & 30,008 & \textbf{10,840} & 87,931 \\ \hline
\end{tabular}
}
\caption{Snake in Grids results.}\label{tab:SnakeGrids}
\end{table*}


\subsection{Coil-in-the-Box (CIB)}
The following table provides the raw numeric runtimes and expansions for finding and proving the optimal constrained paths in the $n$-dimensional hypercubes (4D through 7D).
\begin{table}[h]
\centering
\begin{tabular}{c|l|rr|rr}
\hline
\textbf{Dimension} & \textbf{Algorithm} & \multicolumn{2}{c|}{\textbf{Finding}} & \multicolumn{2}{c}{\textbf{Proving}} \\
\cline{3-6}
\textbf{} & \textbf{} & Expansions & Time[ms] & Expansions & Time[ms] \\
\hline
\multirow{5}{*}{\textbf{4D}} & \textbf{A*} & 18 & 14 & 22 & 14 \\
 & \textbf{XMM} & 9 & 9 & 9 & 9 \\
 & \textbf{XDFBnB} & 3 & 2 & 4 & 2 \\
 & \textbf{BiX-DFBnB F2E} & 2 & 3 & 2 & 3 \\
 & \textbf{BiX-DFBnB F2F} & \textbf{2} & \textbf{1} & \textbf{2} & \textbf{1} \\
\hline
\multirow{5}{*}{\textbf{5D}} & \textbf{A*} & 144 & 137 & 145 & 138 \\
 & \textbf{XMM} & 88 & 159 & 88 & 161 \\
 & \textbf{XDFBnB} & 25 & 33 & 50 & 73 \\
 & \textbf{BiX-DFBnB F2E} & 5 & 26 & 77 & 233 \\
 & \textbf{BiX-DFBnB F2F} & \textbf{5} & \textbf{11} & \textbf{9} & \textbf{26} \\
\hline
\multirow{5}{*}{\textbf{6D}} & \textbf{A*} & 19,698 & 48,423 & 19,699 & 48,625 \\
 & \textbf{XMM} & 12,223 & 53,668 & 34,001 & 120,859 \\
 & \textbf{XDFBnB} & 577 & \textbf{936} & 7,018 & 22,085 \\
 & \textbf{BiX-DFBnB F2E} & 2,347 & 19,507 & 163,561 & 1,375,600 \\
 & \textbf{BiX-DFBnB F2F} & \textbf{512} & 3,228 & \textbf{2,190} & \textbf{18,729} \\
\hline
\multirow{5}{*}{\textbf{7D}} & \textbf{A*} & $>$ 7,000,000 & $>$ 58,000,000 & --- & --- \\	
 & \textbf{XMM} & $>$ 6,500,000 & $>$ 69,000,000 & --- & --- \\ 	
 & \textbf{XDFBnB} & 1,310,050 & 8,834,600 & --- & --- \\
 & \textbf{BiX-DFBnB F2E} & $>$ 5,000,000 & $>$ 173,000,000 & --- & --- \\
 & \textbf{BiX-DFBnB F2F} & \textbf{3,265} & \textbf{56,960} & --- & --- \\
\hline
\end{tabular}

\caption{Coil-in-the-box results.}\label{tab:coil_in_box}
\end{table}

\clearpage
\begin{figure*}
    \centering
    \includegraphics[width=0.3\linewidth]{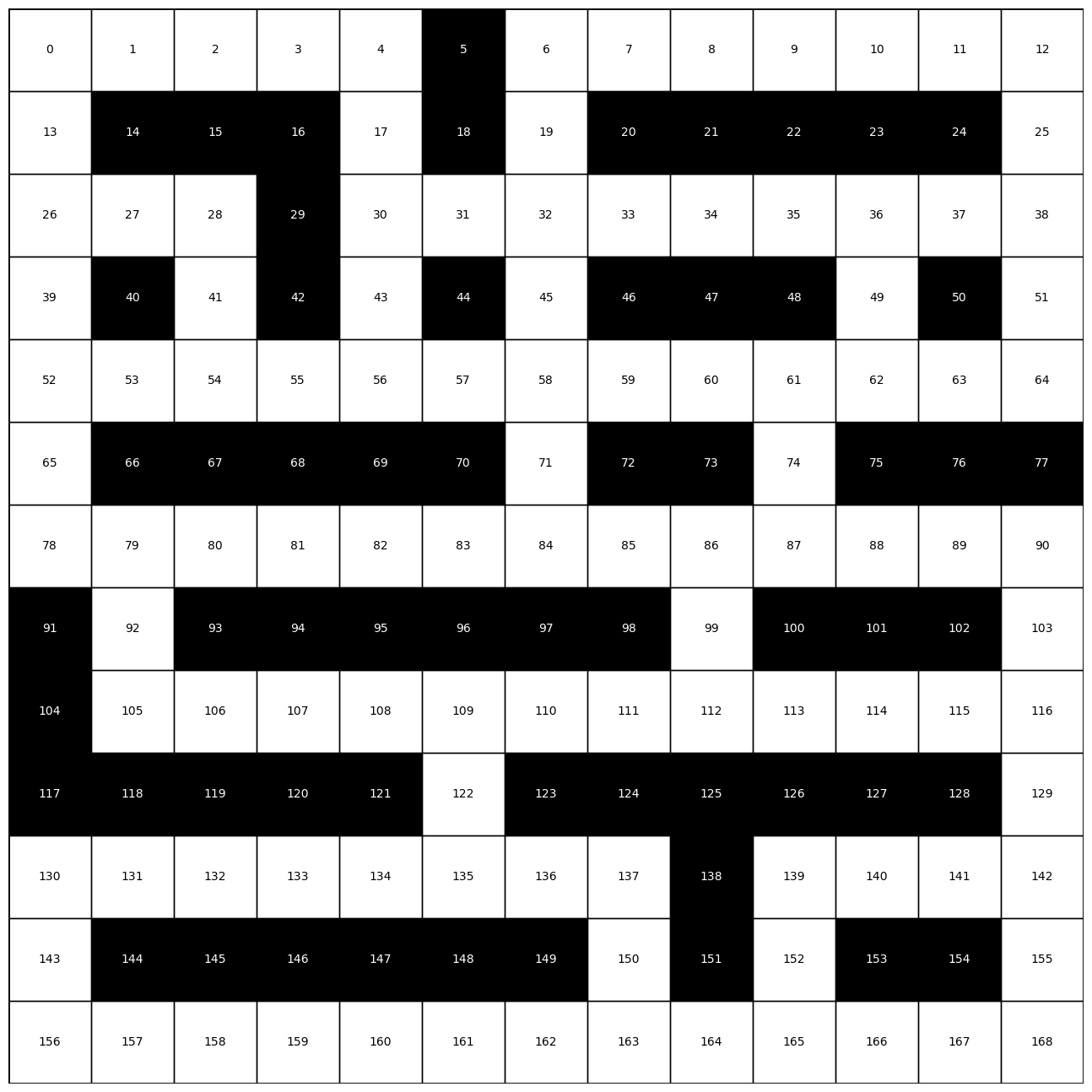}
    \caption{Original maze, with 0 open diamonds}
    \label{fig:placeholder}
\end{figure*}

\begin{figure*}
    \centering
    \includegraphics[width=0.3\linewidth]{Figures/13x13_maze_with_blocks_and_random_removals_1_pink.png}
    \caption{Maze with 1 open diamond}
    \label{fig:placeholder}
\end{figure*}
\begin{figure*}
    \centering
    \includegraphics[width=0.3\linewidth]{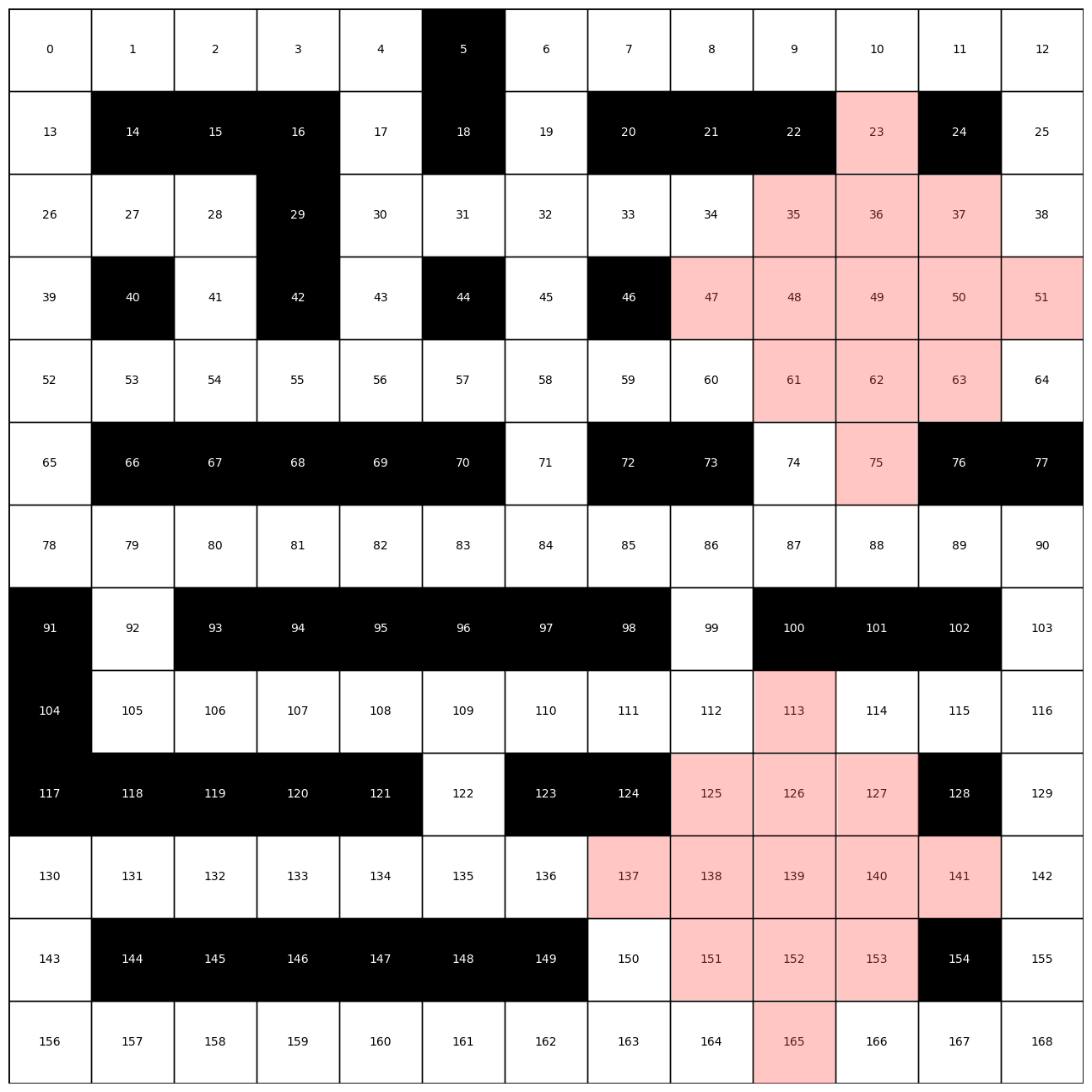}
    \caption{Maze with 2 open diamond}
    \label{fig:placeholder}
\end{figure*}

\end{document}